\documentclass[twoside,11pt]{article}

%

\usepackage{jmlr2e}
\usepackage{color}

\usepackage[utf8]{inputenc}


\usepackage{algorithm}
\usepackage{algorithmic}
\usepackage{amsmath}
\usepackage{subfigure}
\usepackage{wrapfig}

\usepackage{mathtools}

\usepackage{amsmath,amssymb, amsfonts}            





\ShortHeadings{Model-Free Trajectory-based Policy Optimization}{Akrour, Abdolmaleki, Abdulsamad, Peters, and Neumann}
\firstpageno{1}

\def\R{\mathbb{R}}
\def\EE{{\rm I\hspace{-0.50ex}E}}

\def\State{{\cal S}}
\def\Action{{\cal A}}
\def\Normal{{\cal N}}
\def\a{\boldsymbol{a}}
\def\s{\boldsymbol{s}}

\def\w{\boldsymbol{w}}

\def\dd{\mathrm{d}}
\def\KL{\mathrm{KL}}

\def\tr{\text{tr}}

\def\qfunc{Q-Function}

\DeclarePairedDelimiterX{\KLD}[2]{(}{)}{%
  #1\;\delimsize\|\;#2%
}
\newcommand{\KLM}{\KL\KLD}

\begin{document}

\title{Model-Free Trajectory-based Policy Optimization with Monotonic Improvement}

\author{\name Riad Akrour$^1$ \email riad@robot-learning.de\\
		\name Abbas Abdolmaleki$^2$ \email abbas.a@ua.pt\\
		\name Hany Abdulsamad$^1$ \email hany@robot-learning.de\\
		\name Jan Peters$^{1,3}$ \email jan@robot-learning.de\\
       \name Gerhard Neumann$^{1,4}$ \email geri@robot-learning.de\\  
		\addr $^1$CLAS/IAS, Technische Universit\"ate Darmstadt, Hochschulstr. 10, D-64289 Darmstadt, Germany\\	
		$^2$DeepMind, London N1C 4AG, UK\\
		$^3$Max Planck Institute for Intelligent Systems, Max-Planck-Ring 4, T\"ubingen, Germany\\
		$^4$L-CAS, University of Lincoln, Lincoln LN6 7TS, UK
       }

\editor{-}

\maketitle

\begin{abstract}
Many of the recent trajectory optimization algorithms alternate between linear approximation of the system dynamics around the mean trajectory and conservative policy update. One way of constraining the policy change is by bounding the Kullback-Leibler (KL) divergence between successive policies. These approaches already demonstrated great experimental success in challenging problems such as end-to-end control of physical systems. However, the linear approximation of the system dynamics can introduce a bias in the policy update and prevent convergence to the optimal policy. In this article, we propose a new model-free trajectory-based policy optimization algorithm with guaranteed monotonic improvement. The algorithm backpropagates a local, quadratic and time-dependent \qfunc~learned from trajectory data instead of a model of the system dynamics. Our policy update ensures exact KL-constraint satisfaction without simplifying assumptions on the system dynamics. We experimentally demonstrate on highly non-linear control tasks the improvement in performance of our algorithm in comparison to approaches linearizing the system dynamics. In order to show the monotonic improvement of our algorithm, we additionally conduct a theoretical analysis of our policy update scheme to derive a lower bound of the change in policy return between successive iterations.
\end{abstract}

\begin{keywords}
  Reinforcement Learning, Policy Optimization, Trajectory Optimization, Robotics
\end{keywords}




\section{Introduction}
\label{sec:intro}
Trajectory Optimization methods based on stochastic optimal control \citep{Todorov2006,	Theodorou2009, Todorov09} have been very successful in learning high dimensional controls in complex settings such as end-to-end control of physical systems \citep{Levine14}. These methods are based on a time-dependent linearization of the dynamics model around the mean trajectory in order to obtain a closed form update of the policy as a Linear-Quadratic Regulator (LQR). This linearization is then repeated locally for the new policy at every iteration. However, this iterative process does not offer convergence guarantees as the linearization of the dynamics might introduce a bias and impede the algorithm from converging to the optimal policy. To circumvent this limitation, we propose in this paper a novel model-free trajectory-based policy optimization algorithm (MOTO) couched in the approximate policy iteration framework. At each iteration, a Q-Function is estimated locally around the current trajectory distribution using a time-dependent quadratic function. Afterwards, the policy is updated according to a new information-theoretic trust region that bounds the KL-divergence between successive policies in closed form. 

MOTO is well suited for high dimensional continuous state and action spaces control problems. The policy is represented by a time-dependent stochastic linear-feedback controller which is updated by a \qfunc\ propagated backward in time. We extend the work of \citep{Abdolmaleki15}, which was proposed in the domain of stochastic search (having no notion of state space nor that of sequential decisions), to that of sequential decision making and show that our policy class can be updated under a KL-constraint in closed form, when the learned \qfunc\ is a quadratic function of the state and action space. In order to maximize sample efficiency, we rely on importance sampling to reuse transition samples from policies of all time-steps and all previous iterations in a principled way. MOTO is able to solve complex control problems despite the simplicity of the \qfunc\ thanks to two key properties: i) the learned \qfunc\ is fitted to samples of the current policy, which ensures that the 
function is \textit{valid locally} and ii) the closed form update of the policy ensures that the KL-constraint is satisfied exactly irrespective of the number of samples or the non-linearity of the dynamics, which ensures that the \qfunc\ is \textit{used locally}.

The experimental section demonstrates that on tasks with highly non-linear dynamics MOTO outperforms similar methods that rely on a linearization of these dynamics. Additionally, it is shown on a simulated Robot Table Tennis Task that MOTO is able to scale to high dimensional tasks while keeping the sample complexity relatively low; amenable to a direct application to a physical system. 

In addition to the experimental validation previously reported in \cite{Akrour16}, we conduct a theoretical analysis of the policy update in Sec. \ref{sec:theo} and lower bound the increase in policy return between successive iterations of the algorithm. The resulting lower bound validates the use of an expected KL-constraint (Sec. \ref{sec:piPbm}) in a trajectory-based policy optimization setting for ensuring a monotonic improvement of the policy return. Prior theoretical studies reported similar results when the \textit{maximum} (over the state space) KL is upper bounded which is hard to enforce in practice \citep{Schulman15}. Leveraging standard trajectory optimization assumptions, we are able to extend the results when only the \textit{expected} KL under the state distribution of the previous policy is bounded.


\section{Notation}
Consider an undiscounted finite-horizon Markov Decision Process (MDP) of horizon $T$ with state space $\State = \R^{\dd_s}$ and action space $\Action = \R^{\dd_a}$. The transition function $p(\s_{t+1}|\s_t, \a_t)$, which gives the probability (density) of transitioning to state $\s_{t+1}$ upon the execution of action $\a_t$ in $\s_t$, is assumed to be time-independent; while there are $T$ time-dependent reward functions $r_t:\State \times \Action \mapsto \R$. A policy $\pi$ is defined by a set of time-dependent density functions $\pi_t$, where $\pi_t(\a| \s)$ is the probability of executing action $\a$ in state $\s$ at time-step $t$. The goal is to find the optimal policy $\pi^* = \{\pi^*_1, \dots, \pi^*_T\}$ maximizing the policy return $J(\pi) = \EE_{\s_1, \a_1, \dots}\left[\sum_{t=1}^Tr_t(\s_t, \a_t)\right]$, where the expectation is taken w.r.t. all the random variables $\s_t$ and $\a_t$ such that $\s_1\sim \rho_1$ follows the distribution of the initial state, $\a_t\sim \pi_t(.|\s_t)$ and $\s_{t+1}\sim p(\s_{t+1}
|\s_t, \a_t)$.

As is common in Policy Search \citep{Deisenroth2013}, our algorithm operates on a restricted class of parameterized policies $\pi_{\boldsymbol{\theta}}, \boldsymbol{\theta} \in \R^{\dd_\theta}$ and is an iterative algorithm comprising two main steps, \textit{policy evaluation} and \textit{policy update}. Throughout this article, we will assume that each time-dependent policy is parameterized by $\boldsymbol{\theta}_t = \{K_t,\boldsymbol{k}_t,\Sigma_t\}$ such that $\pi_{\boldsymbol{\theta}_t}$ is of linear-Gaussian form $\pi_{\boldsymbol{\theta}_t}(\a|\s) = \Normal(K_t s + \boldsymbol{k}_t, \Sigma_t)$, where the gain matrix $K_t$ is a $d_a\times d_s$ matrix, the bias term $\boldsymbol{k}_t$ is a $d_a$ dimensional column vector and the covariance matrix $\Sigma_t$, which controls the exploration of the policy, is of dimension $d_a\times d_a$; yielding a total number of parameters across all time-steps of $d_{\theta} = T(d_ad_s+\frac{1}{2}d_a(d_a+3))$. 

The policy at iteration $i$ of the algorithm is denoted by $\pi^i$ and following standard definitions, the Q-Function of $\pi^i$ at time-step $t$ is given by $Q^{i}_t(\s, \a) = \EE_{\s_t, \a_t, \dots}\left[\sum_{t'=t}^Tr_{t'}(\s_{t'}, \a_{t'})\right]$ with $(\s_t, \a_t)=(\s,\a)$ and $\a_{t'}\sim \pi^i_{t'}(.|\s_{t'}), \forall t' > t$. While the V-Function is given by $V^i_t(\s) = \EE_{\a\sim \pi_t(.|\s)}\left[Q^\pi_t(\s, \a)\right]$ and the Advantage Function by $A^i_t(\s,\a) = Q^i_t(\s,\a) - V_t^i(\s)$. Furthermore the state distribution at time-step $t$, related to policy $\pi^i$, is denoted by $\rho_t^i(\s)$. In order to keep the notations uncluttered, the time-step or the iteration number is occasionally dropped when a definition applies similarly for all time-steps or iteration number.


\section{Model-free Policy Update for Trajectory-based Policy Optimization}
\label{sec:policyUpdate}
MOTO alternates between policy evaluation and policy update. At each iteration $i$, the policy evaluation step generates a set of $M$ rollouts\footnote{A rollout is a Monte Carlo simulation of a trajectory according to $\rho_1$, $\pi$ and $p$ or the execution of $\pi$ on a physical system.} from the policy $\pi^i$ in order to estimate a (quadratic) Q-Function $\tilde{Q}^{i}$ (Sec. \ref{sec:qfunc}) and a (Gaussian) state distribution $\tilde{\rho}^i$ (Sec. \ref{sec:stateDist}). Using these quantities, an information-theoretic policy update is derived at each time-step that uses a KL-bound as a trust region to obtain the policy $\pi^{i+1}$ of the next iteration.
\subsection{Optimization Problem}
\label{sec:piPbm}
The goal of the policy update is to return a new policy $\pi^{i+1}$ that maximizes the Q-Function $\tilde{Q}^i$ in expectation under the state distribution $\tilde{p}_i$ of the previous policy $\pi^i$. In order to limit policy oscillation between iterations~\citep{Wagner11}, the KL w.r.t. $\pi^i$ is upper bounded. The use of the KL divergence to define the step-size of the policy update has already been successfully applied in prior work \citep{Peters2010, Levine14, Schulman15}. Additionally, we lower bound the entropy of $\pi^{i+1}$ in order to better control the reduction of exploration yielding the following non-linear program:
\begin{align}
& \underset{\pi}{\text{maximize}} \label{eq:maxQ}
& & \int\int \tilde{\rho}^i_t(\s) \pi(\a|\s)\tilde{Q}^{i}_t(\s,\a)\mathrm{d}a\mathrm{d}s,\\
& \text{subject to} \label{eq:klCst}
& & \EE_{s\sim \tilde{\rho}^i_t(\s)}\left[\KLM{\pi(.|\s)}{\pi_t^{i}(.|\s)}\right]\leq \epsilon,\\
&&& \EE_{s\sim \tilde{\rho}^i_t(\s)}\left[{\cal H}\left(\pi(.|\s)\right)\right] \geq \beta.
\label{eq:hCst}
\end{align}
The KL between two distributions $p$ and $q$ is given by $\KLM{p}{q} = \int p(x) \log  \frac{p(x)}{q(x)} \mathrm{d}x$ while the entropy ${\cal H}$ is given by ${\cal H} = -\int p(x) \log  p(x)\mathrm{d}x$. The step-size $\epsilon$ is a hyper-parameter of the algorithm kept constant throughout the iterations while $\beta$ is set according to the entropy of the current policy $\pi^i_t$, $\beta = \EE_{s\sim \tilde{\rho}^i_t(\s)}\left[{\cal H}\left(\pi^i_t(.|\s)\right)\right] - \beta_0$ and $\beta_0$ is the entropy reduction hyper-parameter kept constant throughout the iterations. 

Eq. \eqref{eq:maxQ} indicates that $\pi^{i+1}_t$ maximizes $\tilde{Q}^i_t$ in expectation under its own action distribution and the state distribution of $\pi^{i}_t$. Eq. \eqref{eq:klCst} bounds the average change in the policy to the step-size $\epsilon$ while Eq. \eqref{eq:hCst} controls the exploration-exploitation trade-off and ensures that the exploration in the action space (which is directly linked to the entropy of the policy) is not reduced too quickly. A similar constraint was introduced in the stochastic search domain by \citep{Abdolmaleki15}, and was shown to avoid premature convergence. This constraint is even more crucial in our setting because of the inherent \textit{non-stationarity} of the objective function being optimized at each iteration. The cause for the non-stationarity of the objective optimized at time-step $t$ in the policy update is twofold: i) updates of policies $\pi_{t'}$ with time-step $t' > t$ will modify in the next iteration of the algorithm $\tilde{Q}_t$ as a function of $s$ and $a$ and hence the optimization landscape as a function of the policy parameters, ii) updates of policies with time-step $t'<t$ will induce a change in the state distribution $\rho_t$. If the policy had unlimited expressiveness, the optimal solution of Eq. \eqref{eq:maxQ} would be to choose $\arg\max_a \tilde{Q}_t$ irrespective of $\rho_t$. However, due to the restricted class of the policy, any change in $\rho_t$ will likely change the optimization landscape including the position of the optimal policy parameter. Hence, Eq. \eqref{eq:hCst} ensures that exploration in action space is maintained as the optimization landscape evolves and avoids premature convergence.

\subsection{Closed Form Update}
\label{sec:piUpdate}
Using the method of Lagrange multipliers, the solution of the optimization problem in section \ref{sec:piPbm} is given by
\begin{align}
\label{eq:piUpdateGeneral}
\pi'_t(\a|\s) \propto \pi_t(\a|\s)^{\eta^*/(\eta^*+\omega^*)} \exp\left(\frac{\tilde{Q}_t(\s,\a)}{\eta^*+\omega^*}\right),
\end{align} 
with $\eta^*$ and $\omega^*$ being the optimal Lagrange multipliers related to the KL and entropy constraints respectively. Assuming that $\tilde{Q}_t(\s,\a)$ is of quadratic form in $a$ and $s$
\begin{align}
\label{eq:quadQ}
\tilde{Q}_t(\s,\a) = \frac{1}{2}\a^TQ_{aa}\a + \a^TQ_{as}\s+\a^T\boldsymbol{q}_{a}+q(\s),
\end{align}
with $q(\s)$ grouping all terms of $\tilde{Q}_t(\s,\a)$ that do not depend\footnote{Constant terms and terms depending on $s$ but not $a$ won't appear in the policy update. As such, and albeit we only refer in this article to $Q_t(\s,\a)$, the Advantage Function $A_t(\s,\a)$ can be used interchangeably in lieu of $Q_t(\s,\a)$ for updating the policy.} on $a$, then $\pi'_t(\a|\s)$ is again of linear-Gaussian form
\[\pi'_t(\a|\s) = \Normal(\a|FL\s+F\boldsymbol{f}, F(\eta^*+\omega^*)),\] 
such that the gain matrix, bias and covariance matrix of $\pi'_t$ are function of matrices $F$ and $L$ and vector $\boldsymbol{f}$ where
\begin{align*}
F = (\eta^* \Sigma_t^{-1}-Q_{aa})^{-1}, & & L = \eta^*\Sigma_t^{-1}K_t+Q_{as}, & & \boldsymbol{f} = \eta^*\Sigma_t^{-1}\boldsymbol{k}_t+\boldsymbol{q}_a.
\end{align*}
Note that $\eta \Sigma_t^{-1}-Q_{aa}$ needs to be invertible and positive semi-definite as it defines the new covariance matrix of the linear-Gaussian policy. For this to hold, either $Q_t(\s,.)$ needs to be concave in $a$ (i.e. $Q_{aa}$ is negative semi-definite), or $\eta$ needs to be large enough (and for any $Q_{aa}$ such $\eta$ always exists). A too large $\eta$ is not desirable as it would barely yield a change to the current policy (too small KL divergence) and could negatively impact the convergence speed. Gradient based algorithms for learning model parameters with a specific semi-definite shape are available \citep{Bhojanapalli15} and could be used for learning a concave $Q_t$. However, we found in practice that the resulting $\eta$ was always small enough~(resulting in a maximally tolerated KL divergence of $\epsilon$ between successive policies) while $F$ remains well defined, without requiring additional constraints on the nature of $Q_{aa}$.

\subsection{Dual Minimization}
\label{sec:dual}
The Lagrangian multipliers $\eta$ and $\omega$ are obtained by minimizing the convex dual function 
\[g_t(\eta, \omega) = \eta\epsilon - \omega\beta + (\eta+\omega)\int \tilde{\rho}_t(\s) \log\left( \int \pi_t(\a|\s)^{\eta/(\eta+\omega)}\exp\left(\tilde{Q}_t(\s,\a)/(\eta+\omega)\right) \dd \a \right) \dd \s.
\]
Exploiting the structure of the quadratic Q-Function $\tilde{Q}_t$ and the linear-Gaussian policy $\pi_t(\a|\s)$, the inner integral over the action space can be evaluated in closed form and the dual simplifies to
\[g(\eta, \omega) = \eta\epsilon_t-\omega\beta_t+\int \rho_t(\s) (\s^TM\s + \s^T\boldsymbol{m} + m_0) \dd \s.\]
The dual function further simplifies, by additionally assuming normality of the state distribution $\tilde{\rho}_t(\s) = \Normal(\s|\boldsymbol{\mu}_s, \Sigma_s)$, to the function 
\[g_t(\eta, \omega) = \eta\epsilon-\omega\beta+\boldsymbol{\mu}^T_s M\boldsymbol{\mu}_s + \tr(\Sigma_s M) + \boldsymbol{\mu}^T_s \boldsymbol{m} + m_0,\]
which can be efficiently optimized by gradient descent to obtain $\eta^*$ and $\omega^*$. The full expression of the dual function, including the definition of $M$, $\boldsymbol{m}$ and $m_0$ in addition to the partial derivatives $\frac{\partial g_t(\eta, \omega)}{\partial \eta}$ and $\frac{\partial g_t(\eta, \omega)}{\partial \omega}$ are given in Appendix \ref{sec:app_grad}.
\section{Sample Efficient Policy Evaluation}
\label{sec:policyEval}
The KL constraint introduced in the policy update gives rise to a non-linear optimization problem. This problem can still be solved in closed form for the class of linear-Gaussian policies, if the learned function $\tilde{Q}_t^i$ is quadratic in $\s$ and $\a$. The first subsection introduces the main supervised learning problem solved during the policy evaluation for learning $\tilde{Q}_t^i$ while the remaining subsections discuss how to improve its sample efficiency.
\subsection{The Q-Function Supervised Learning Problem}
\label{sec:qfunc}
In the remainder of the section, we will be interested in finding the parameter $\w$ of a linear model $\tilde{Q}_t^i = \langle\w, \phi(\s,\a)\rangle$, where the feature function $\phi$ contains a bias and all the linear and quadratic terms of $s$ and $a$, yielding $1 + (d_a + d_s)(d_a + d_s + 3) / 2$ parameters. $\tilde{Q}_t^i$ can subsequently be written as in Eq. \eqref{eq:quadQ} by extracting $Q_{aa}$, $Q_{as}$ and $\boldsymbol{q}_a$ from $\w$.

At each iteration $i$, $M$ rollouts are performed following $\pi^i$. Let us initially assume that $\tilde{Q}_t^i$ is learned only from samples ${\cal D}_t^i=\lbrace \s_t^{[k]}, \a_t^{[k]}, \s_{t+1}^{[k]}; k = 1..M \rbrace$ gathered from the execution of the M rollouts. The parameter $\w$ of $\tilde{Q}_t^i$ is learned by regularized linear least square regression
\begin{align}
\label{eq:lossQ}
\w = \underset{\w}{\arg\min} \frac{1}{M}\sum_{k=1}^M\bigl(\langle \w, \phi(\s_t^{[k]}, \a_t^{[k]})\rangle -  \hat{Q}_t^i(\s_t^{[k]}, \a_t^{[k]})\bigr)^2  + \lambda \w^T\w,
\end{align}
where the target value $\hat{Q}_t^i(\s^{[k]}, \a^{[k]})$ is a noisy estimate of the true value $Q_t^i(\s_t^{[k]}, \a_t^{[k]})$. We will distinguish two cases for obtaining the estimate $\hat{Q}_t^i(\s_t^{[k]}, \a_t^{[k]})$. 

\subsubsection{Monte-Carlo Estimate} This estimate is obtained by summing the future rewards
for each trajectory $k$, yielding $\hat{Q}_t^i(\s_t^{[k]}, \a_t^{[k]}) = \sum_{t' = t}^T r_{t'}(\s_{t'}^{[k]}, \a_{t'}^{[k]})$. This estimator is known to have no bias but high variance. The variance can be reduced by averaging over multiple rollouts, assuming we can reset to states $\s_t^{[k]}$. However, such an assumption would severely limit the applicability of the algorithm on physical systems.

\subsubsection{Dynamic Programming} In order to reduce the variance, this estimate  exploits the V-Function to reduce the noise of the expected rewards of time-steps $t'>t$ through the following identity 
\begin{align}
\label{eq:hatQ}
\hat{Q}_t^i(\s_t^{[k]}, \a_t^{[k]}) = r_t(\s_t^{[k]}, \a_t^{[k]}) + \hat{V}^i_{t+1}(\s_{t+1}^{[k]}),
\end{align} 
which is unbiased if $\hat{V}^i_{t+1}$ is. However, we will use for $\hat{V}^i_{t+1}$ an approximate V-Function $\tilde{V}^i_{t+1}$ learned recursively in time. This approximation might introduce a bias  which will accumulate as $t$ goes to 1. Fortunately, $\tilde{V}$ is not restricted by our algorithm to be of a particular class as it does not appear in the policy update. Hence, the bias can be reduced by increasing the complexity of the function approximator class. Nonetheless, in this article, a quadratic function will also be used for the V-Function which worked well in our experiments.

The V-Function is learned by first assuming that $\tilde{V}^i_{T+1}$ is the zero function.\footnote{Alternatively one could assume the presence of a final reward $R_{T+1}(\s_{t+1})$, as is usually formulated in control tasks \citep{Bertsekas1995}, to which $V^i_{T+1}$ could be initialized to.} Subsequently and recursively in time, the function $\tilde{V}^i_{t+1}$ and the transition samples in ${\cal D}_t^i$ are used to fit the parametric function $\tilde{V}_t^i$ by minimizing the loss $\sum_{k=1}^M\left(\hat{Q}_t^i(\s_t^{[k]}, \a_t^{[k]}) - \tilde{V}^i_t(\s_t^{[k]}) \right)^2$.

In addition to reducing the variance of the estimate $\hat{Q}_t^i(\s_t^{[k]}, \a_t^{[k]})$, the choice of learning a V-Function is further justified by the increased possibility of reusing sample transitions from all time-steps and previous iterations.
\subsection{Sample Reuse}
\label{sec:importance}
In order to improve the sample efficiency of our approach, we will reuse samples from different time-steps and iterations using importance sampling. Let the expected loss which $\tilde{Q}_t^i$ minimizes under the assumption of an infinite number of samples be 
\begin{align*}
\w = \arg\min_{\w}\EE[\ell_t^i(\s,\a,\s';\w)],
\end{align*} 
where the loss $\ell_t^i$ is the inner term within the sum in Eq. \eqref{eq:lossQ}; the estimate $\hat{Q}_t^i(\s_t^{[k]}, \a_t^{[k]})$ is taken as in Eq. \eqref{eq:hatQ} and the expectation is with respect to the current state $\s\sim\rho_t^i$, the action $\a\sim\pi_t^i(.|\s)$ and the next state $\s'\sim p(.|\s,\a)$. 

\subsubsection{Reusing samples from different time-steps}
To use transition samples from all time-steps when learning $\tilde{Q}_t^i$, we rely on importance sampling, where the importance weight (IW) is given by the ratio between the state-action probability of the current time-step $z^i_t(\s,\a) = \rho^i_t(\s)\pi^i_t(\a|\s)$ divided by the time-independent state-action probability of $\pi^i$ given by $z^i(\s,\a) = \frac{1}{T}\sum_{t=1}^Tz^i_t(\s,\a)$. The expected loss minimized by $\tilde{Q}_t^i$ becomes
\begin{align}
\label{eq:iw}
\min_{\w}\EE \left[\frac{z_t^i(\s,\a)}{z^i(\s,\a)} \ell_t^i(\s,\a,\s';\w)\mid (\s,\a)\sim z^i(\s,\a)\right].
\end{align}
Since the transition probabilities are not time-dependent they cancel out from the IW. Upon the computation of the IW, weighted least square regression is used to minimize an empirical estimate of \eqref{eq:iw} for the data set ${\cal D}^i = \cup_{t=1}^T {\cal D}^i_t$. Note that the (numerator of the) IW needs to be recomputed at every time-step for all samples $(\s,\a)\in {\cal D}^i$. Additionally, if the rewards are time-dependent, the estimate $\hat{Q}_t^i(\s_t^{[k]}, \a_t^{[k]})$ in Eq. \eqref{eq:hatQ} needs to be recomputed with the current time-dependent reward, assuming the reward function is known. 

\subsubsection{Reusing samples from previous iterations}
Following a similar reasoning, at a given time-step $t$, samples from previous iterations can be reused for learning $\tilde{Q}_t^i$. In this case, we have access to the samples of the state-action distribution $z^{1:i}_t(\s,\a) \propto \sum_{j=1}^i z^j_t(\s,\a)$. The computation of $z^{1:i}_t$ requires the storage of all previous policies and state distributions. Thus, we will in practice limit ourselves to the $K$ last iterations.

Finally, both forms of sample reuse will be combined for learning $\tilde{Q}_t^i$ under the complete data set up to iteration $i$, ${\cal D}^{1:i} = \cup_{j=1}^i{\cal D}^j$ using weighted least square regression where the IW are given by $z^i_t(\s,\a)/z^{1:i}(\s,\a)$ with $z^{1:i}(\s,\a) \propto \sum_{t=1}^T z^{1:i}_t(\s,\a)$.

\subsection{Estimating the State Distribution}
\label{sec:stateDist}
To compute the IW, the state distribution at every time-step $\rho^i_t$ needs to be estimated. Since $M$ rollouts are sampled for every policy $\pi^i$ only $M$ state samples are available for the estimation of $\rho^i_t$, necessitating again the reuse of previous samples to cope with higher dimensional control tasks.

\subsubsection{Forward propagation of the state distribution}
The first investigated solution for the estimation of the state distribution is the propagation of the estimate $\tilde{\rho}^i_t$ forward in time. Starting from $\tilde{\rho}^i_1$ which is identical for all iterations, importance sampling is used to learn $\tilde{\rho}^i_{t+1}$ with $t > 1$ from samples $(\s_{t}, \a_{t}, \s_{t+1})\in{\cal D}^{1:i}_{t}$ by weighted maximum-likelihood; where each sample $\s_{t+1}$ is weighted by $z^i_{t}(\s_{t}, \a_{t})/z_{t}^{1:i}(\s_{t}, \a_{t})$. And the computation of this IW only depends on the previously estimated state distribution $\tilde{\rho}^i_{t}$. 
In practice however, the estimate $\tilde{\rho}^i_t$ might entail errors despite the use of all samples from past iterations, which are propagated forward leading to a degeneracy of the number of effective samples in latter time-steps.

\subsubsection{State distribution of a mixture policy}
The second considered solution for the estimation of $\tilde{\rho}^i_{t}$ is heuristic but behaved better in practice. It is based on the intuition that the KL constraint of the policy update will yield state distributions that are close to each other (see Sec. \ref{sec:theo} for a theoretical justification of the closeness in state distributions) and state samples from previous iterations can be reused in a simpler manner. Specifically, $\tilde{\rho}_t^i$ will be learned from samples of the mixture policy $\pi^{1:i} \propto \sum_{j = 1}^i \gamma^{i-j}\pi^{j}$ which selects a policy from previous iterations with an exponentially decaying (w.r.t. the iteration number) probability and executes it for a whole rollout. In practice, the decay factor $\gamma$ is selected according to the dimensionality of the problem, the number of samples per iterations $M$ and the KL upper bound $\epsilon$ (intuitively, a smaller $\epsilon$ yields closer policies and henceforth more reusable samples). The estimated state distribution $\tilde{\rho}_t^i$ is learned as a Gaussian distribution by weighted maximum likelihood from samples of ${\cal D}_t^{1:i}$ where a 
sample of iteration $j$ is weighted by $\gamma^{i-j}$.

\begin{algorithm}[tb]
   \caption{{\small Model-Free Trajectory-based Policy Optimization} (MOTO)}
   \label{alg:clops}
\begin{algorithmic}
   \STATE {\bfseries Input:} Initial policy $\pi^0$, number of trajectories per iteration M, step-size $\epsilon$ and entropy reduction rate $\beta_0$
   \STATE {\bfseries Output:} Policy $\pi^{N}$ 
   \FOR{$i=0$ {\bfseries to} $N-1$}
   \STATE Sample M trajectories from $\pi^i$		
		\FOR{$t=T$ {\bfseries to} $1$}
		\STATE Estimate state distribution $\tilde{\rho}^i_t$ \hfill (Sec. \ref{sec:stateDist})       
        \STATE Compute IW for all $(\s,a,s') \in {\cal D}^{1:i}$\hfill (Sec. \ref{sec:importance})		
		\STATE Estimate the Q-Function $\tilde{Q}_t^i$ \hfill (Sec. \ref{sec:qfunc})
		\STATE Optimize: $(\eta^*, \omega^*) = \arg\min g^i_t(\eta, \omega)$ \hfill (Sec. \ref{sec:dual})
		\STATE Update $\pi_t^{i+1}$ using $\eta^*$, $\omega^*$,$\tilde{\rho}^i_t$ and  $\tilde{Q}_t^i$ \hfill (Sec. \ref{sec:piUpdate})
		\ENDFOR   
   \ENDFOR
\end{algorithmic}
\end{algorithm}

\subsection{The MOTO Algorithm}
MOTO is summarized in Alg. \ref{alg:clops}. The innermost loop is split between policy evaluation (Sec. \ref{sec:policyEval}) and policy update (Sec. \ref{sec:policyUpdate}). For every time-step $t$, once the state distribution $\tilde{\rho}_t^i$ is estimated, the IWs of all the transition samples are computed and used to learn the Q-Function (and the V-Function using the same IWs, if dynamic programming is used when estimating the Q-Function), concluding the policy evaluation part.
Subsequently, the components of the quadratic model $\tilde{Q}_t^i$ that depend on the action are extracted and used to find the optimal dual parameters $\eta^*$ and $\omega^*$ that are respectively related to the KL and the entropy constraint, by minimizing the convex dual function $g_t^i$ using gradient descent. The policy update then uses $\eta^*$ and $\omega^*$ to return the new policy $\pi_{t+1}$ and the process is iterated.

In addition to the simplification of the policy update, the rationale behind the use of a local quadratic approximation for $Q_t^i$ is twofold: i) since $Q_t^i$ is only optimized locally (because of the KL constraint), a quadratic model would potentially contain as much information as a Hessian matrix in a second order gradient descent setting ii) If $\tilde{Q}_t$ in Eq. \eqref{eq:piUpdateGeneral} is an arbitrarily complex model then it is common that $\pi'_t$, of linear-Gaussian form, is fit by weighted maximum-likelihood \citep{Deisenroth2013}; it is clear though from Eq. \eqref{eq:piUpdateGeneral} that however complex $\tilde{Q}_t(\s,\a)$ is, if both $\pi_t$ and $\pi'_t$ are of linear-Gaussian form then there exist a quadratic model that would result in the same policy update. Additionally, note that $\tilde{Q}_t$ is not used when learning $\tilde{Q}_{t-1}$ (sec. \ref{sec:qfunc}) and hence the bias introduced by $\tilde{Q}_t$ will not propagate back. For these reasons, we think that choosing a more complex class for $\tilde{Q}_t$ than that of quadratic functions might not necessarily lead to an improvement of the resulting policy, for the class of linear-Gaussian policies. 


\section{Monotonic Improvement of the Policy Update}
\label{sec:theo}
We analyze in this section the properties of the constrained optimization problem solved during our policy update. \cite{Kakade02} showed that in the approximate policy iteration setting, a monotonic improvement of the policy return can be obtained if the successive policies are close enough. While in our algorithm the optimization problem defined in Sec. \ref{sec:piPbm} bounds the expected policy KL under the state distribution of the current iteration $i$, it does not tell us how similar the policies are under the new state distribution and a more careful analysis needs to be conducted.  

The analysis we present builds on the results of \cite{Kakade02} to lower-bound the change in policy return $J(\pi^{i+1}) - J(\pi^{i})$ between the new policy $\pi^{i+1}$ (solution of the optimization problem defined in Sec. \ref{sec:piPbm}) and the current policy $\pi^{i}$. Unlike \cite{Kakade02}, we enforce closeness between successive policies with a KL constraint instead of by mixing $\pi^{i+1}$ and $\pi^{i}$. Related results were obtained when a KL constraint is used in \cite{Schulman15}. Our main contribution is to extend these results to the trajectory optimization setting with continuous states and actions and where the expected KL between the policies is bounded instead of the maximum KL over the state space (which is hard to achieve in practice). 

In what follows, $p$ and $q$ denote the next policy $\pi^{i+1}$ and the current policy $\pi^i$ respectively. We will denote the state distribution and policy at time-step $t$ by $p_t$ and $p_t(.|\s)$ respectively (and similarly for $q$). First, we start by writing the difference between policy returns in term of advantage functions.

\begin{lemma}
\label{th:perfDiff}
For any two policies $p$ and $q$, and where $A_t^q$ denotes the advantage function at time-step $t$ of policy $q$, the difference in policy return is given by  \[J(p) - J(q) = \sum_{t=1}^{T}\EE_{\s\sim p_t, \a \sim p_t(.|\s)}\left[A_t^q(\s,\a)\right].\]
\end{lemma}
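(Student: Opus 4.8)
The plan is to prove the identity in the familiar "performance difference lemma" style, adapted to the undiscounted finite-horizon setting. First I would rewrite the right-hand side by expanding each advantage term as $A_t^q(\s,\a) = Q_t^q(\s,\a) - V_t^q(\s)$. Using the one-step Bellman relation for $q$, namely $Q_t^q(\s,\a) = r_t(\s,\a) + \EE_{\s'\sim p(.|\s,\a)}[V_{t+1}^q(\s')]$ (with the convention $V_{T+1}^q \equiv 0$), each summand becomes
\[
\EE_{\s\sim p_t,\a\sim p_t(.|\s)}\!\left[ r_t(\s,\a) + \EE_{\s'\sim p(.|\s,\a)}[V_{t+1}^q(\s')] - V_t^q(\s) \right].
\]

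Next I would sum over $t$ from $1$ to $T$ and observe that the $V$-terms telescope. The key point is that, since $p$ itself generates the state distributions $p_t$, the distribution of $\s'$ obtained by drawing $\s\sim p_t$, $\a\sim p_t(.|\s)$, $\s'\sim p(.|\s,\a)$ is exactly $p_{t+1}$. Hence $\EE_{\s\sim p_t,\a\sim p_t(.|\s)}\!\left[\EE_{\s'\sim p(.|\s,\a)}[V_{t+1}^q(\s')]\right] = \EE_{\s\sim p_{t+1}}[V_{t+1}^q(\s)]$, which cancels against the $-\EE_{\s\sim p_{t+1}}[V_{t+1}^q(\s)]$ term appearing in the $(t+1)$-st summand. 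After cancellation only two boundary terms survive: $-\EE_{\s\sim p_1}[V_1^q(\s)]$ from $t=1$ and $\EE_{\s\sim p_{T+1}}[V_{T+1}^q(\s)] = 0$ from $t=T$. What remains is $\sum_{t=1}^{T}\EE_{\s\sim p_t,\a\sim p_t(.|\s)}[r_t(\s,\a)] - \EE_{\s\sim p_1}[V_1^q(\s)]$. The first sum is by definition $J(p)$, since $p_1 = \rho_1$ is the initial-state distribution shared by all policies; and $\EE_{\s\sim p_1}[V_1^q(\s)] = \EE_{\s\sim \rho_1}[V_1^q(\s)] = J(q)$. This yields $J(p) - J(q)$, completing the argument.

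I do not anticipate a serious obstacle here; the only point requiring a little care is bookkeeping the telescoping sum and being explicit that the expectation over the "next state" under $p$'s own trajectory measure is precisely $p_{t+1}$ — i.e. that the state-marginal recursion $p_{t+1}(\s') = \int p_t(\s)\,p_t(\a|\s)\,p(\s'|\s,\a)\,\dd\a\,\dd\s$ holds — and that the horizon boundary conventions ($V_{T+1}^q \equiv 0$, $p_1 = q_1 = \rho_1$) are stated up front. Everything else is the standard rearrangement.
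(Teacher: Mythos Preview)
Your proposal is correct and is precisely the standard telescoping argument for the performance-difference lemma. The paper does not spell out its own proof but simply defers to Lemma~5.2.1 of Kakade's thesis, which is exactly the argument you outline; so your approach coincides with the one the paper invokes.
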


The proof of Lemma \ref{th:perfDiff} is given by the proof of Lemma 5.2.1 in \citep{Kakade03}. Note that Lemma \ref{th:perfDiff} expresses the change in policy return in term of expected advantage under the current state distribution while we optimize the advantage function under the state distribution of policy $q$, which is made apparent in Lemma \ref{th:perf_kl}.

\begin{lemma}
\label{th:perf_kl}
Let $\epsilon_t = \KLM{p_t}{q_t}$ be the KL divergence between state distributions $p_t(.)$ and $q_t(.)$ and let $\delta_t = \max_s|\EE_{\a\sim p_t(.|\s)}[A_t^q(\s,\a)]|$, then for any two policies p and q we have \[J(p) - J(q) \geq \sum_{t=1}^{T}\EE_{\s \sim q_t, \a \sim p_t(. | \s)}\left[A_t^q(\s,\a)\right] - 2\sum_{t=1}^{T}\delta_t\sqrt{\frac{\epsilon_t}{2}}.\]
\end{lemma}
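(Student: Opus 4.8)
The plan is to start from the exact identity of Lemma~\ref{th:perfDiff} and to control, one time-step at a time, the error incurred by replacing the state distribution $p_t$ by $q_t$ in the expectation. Introduce the shorthand $f_t(\s) = \EE_{\a\sim p_t(.|\s)}[A_t^q(\s,\a)]$, so that Lemma~\ref{th:perfDiff} reads $J(p) - J(q) = \sum_{t=1}^{T}\EE_{\s\sim p_t}[f_t(\s)]$, while the first term of the claimed lower bound is exactly $\sum_{t=1}^{T}\EE_{\s\sim q_t}[f_t(\s)]$. Hence it suffices to show, for each $t$, that $\EE_{\s\sim p_t}[f_t(\s)] \ge \EE_{\s\sim q_t}[f_t(\s)] - 2\delta_t\sqrt{\epsilon_t/2}$, and then sum over $t$.

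For a fixed $t$, write the per-time-step gap as $\big|\EE_{\s\sim p_t}[f_t] - \EE_{\s\sim q_t}[f_t]\big| = \big|\int \big(p_t(\s) - q_t(\s)\big) f_t(\s)\,\dd\s\big|$. By the definition of $\delta_t$ we have $|f_t(\s)| \le \delta_t$ for every $\s$, so Hölder's inequality bounds this by $\delta_t \int |p_t(\s) - q_t(\s)|\,\dd\s = 2\delta_t\,\TV(p_t,q_t)$, where $\TV$ denotes the total-variation distance. Pinsker's inequality then gives $\TV(p_t,q_t) \le \sqrt{\tfrac12\KLM{p_t}{q_t}} = \sqrt{\epsilon_t/2}$, so the gap is at most $2\delta_t\sqrt{\epsilon_t/2}$.

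Combining the two estimates yields $\EE_{\s\sim p_t}[f_t] \ge \EE_{\s\sim q_t}[f_t] - 2\delta_t\sqrt{\epsilon_t/2}$; summing over $t=1,\dots,T$ and unfolding the definition of $f_t$ recovers the stated inequality. The argument is essentially a one-sided, summed version of the standard simulation-lemma estimate, and no step is genuinely delicate: beyond Lemma~\ref{th:perfDiff} the only ingredients are Hölder's inequality and Pinsker's inequality, and the only thing to watch is the bookkeeping of the constant (the factor $2$ from $\int|p_t-q_t| = 2\,\TV(p_t,q_t)$ together with the $\tfrac12$ inside Pinsker). It is worth noting that the boundedness of $f_t$ used here is exactly the finiteness of $\delta_t$; no use is made yet of the fact that $A_t^q$ is an advantage function, i.e. $\EE_{\a\sim q_t(.|\s)}[A_t^q(\s,\a)]=0$ — that structure is what will later be needed to relate $\epsilon_t$ and $\delta_t$ back to the policy-level expected-KL bound $\epsilon$.
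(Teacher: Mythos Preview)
Your proof is correct and follows essentially the same route as the paper: introduce the per-state expected advantage, bound the change-of-measure error by $\delta_t$ times the $L^1$ distance between $p_t$ and $q_t$, apply Pinsker, and sum over $t$ together with Lemma~\ref{th:perfDiff}. The only cosmetic differences are your explicit naming of the shorthand $f_t$ and of H\"older/total-variation, whereas the paper performs the same bounding inline.
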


\begin{proof}
\begin{align*}
\EE_{s\sim p_t, a \sim p_t(.|\s)}\left[A_t^q(\s_t,\a_t)\right] &= \int p_t(\s) \int p_t(\a_t | \s_t)  A_t^q(\s_t,\a_t),\\
&= \!\begin{multlined}[t] 
\int q_t(\s) \int p_t(\a_t | \s_t)  A_t^q(\s_t,\a_t) \\+ \int (p_t(\s)-q_t(\s)) \int p_t(\a_t | \s_t)  A_t^q(\s_t,\a_t), 
\end{multlined}\\
&\geq \EE_{s \sim q_t, a \sim p_t(. | \s)}\left[A_t^q(\s,\a)\right] - \delta_t \int (p_t(\s)-q_t(\s)),\\
&\geq \EE_{s \sim q_t, a \sim p_t(. | \s)}\left[A_t^q(\s,\a)\right] - 2\delta_t \frac{1}{2}\int |p_t(\s)-q_t(\s)|,\\
&\geq \EE_{s \sim q_t, a \sim p_t(. | \s)}\left[A_t^q(\s,\a)\right] - 2\delta_t \sqrt{\frac{1}{2}\KLM{p_t}{q_t}}. \tag{Pinsker's inequality}
\end{align*}
Summing over the time-steps and using Lemma \ref{th:perfDiff} completes the proof.
\end{proof}

Lemma \ref{th:perf_kl} lower-bounds the change in policy return by the advantage term optimized during the policy update and a negative change that quantifies the change in state distributions between successive policies. The core of our contribution is given by Lemma~\ref{th:kl_state} which relates the change in state distribution to the expected KL constraint between policies of our policy update.

\begin{lemma}
\label{th:kl_state}
If for every time-step, the state distributions $p_t$ and $q_t$  are Gaussian and the policies $p_t(.|\s_t)$ and $q_t(.|\s_t)$ are linear-Gaussian and if $\EE_{s\sim q_t}\left[\KLM{p_t(.|\s)}{q_t(.|\s)}\right]\leq \epsilon$ for every time-step then  $\KLM{p_t}{q_t} = \mathcal{O}(\epsilon)$ as $\epsilon \to 0$ for every time-step.
\end{lemma}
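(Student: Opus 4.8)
The plan is to induct on the time-step $t$, propagating a bound on $\KLM{p_t}{q_t}$ forward in time. The engine of the induction is a one-step recursion obtained from the chain rule for the KL divergence applied to the joint laws of $(\s_t,\a_t,\s_{t+1})$ under $p$ and under $q$: since both joints factor through the \emph{same} transition kernel $p(\s_{t+1}|\s_t,\a_t)$, that factor contributes nothing, leaving
\[
\KLM{p_t(\s)\,p_t(\a|\s)\,p(\s'|\s,\a)}{q_t(\s)\,q_t(\a|\s)\,p(\s'|\s,\a)} \;=\; \KLM{p_t}{q_t} + \EE_{\s\sim p_t}\left[\KLM{p_t(.|\s)}{q_t(.|\s)}\right].
\]
Marginalising both joints onto $\s_{t+1}$ and applying the data-processing inequality gives $\KLM{p_{t+1}}{q_{t+1}} \le \KLM{p_t}{q_t} + \EE_{\s\sim p_t}[\KLM{p_t(.|\s)}{q_t(.|\s)}]$ (the lemma's assumption that the propagated state distributions are Gaussian is what lets us identify these marginals with $p_{t+1},q_{t+1}$). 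Since $p_1=q_1=\rho_1$ the base case $\KLM{p_1}{q_1}=0$ is immediate, so the whole proof reduces to showing that, under the inductive hypothesis $\KLM{p_t}{q_t}=\mathcal{O}(\epsilon)$, the per-step term $\EE_{\s\sim p_t}[\KLM{p_t(.|\s)}{q_t(.|\s)}]$ is itself $\mathcal{O}(\epsilon)$.

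The obstacle is that the policy update only controls $\EE_{\s\sim q_t}[\KLM{p_t(.|\s)}{q_t(.|\s)}]\le\epsilon$, the expected policy KL under the \emph{old} state distribution, whereas the recursion needs it under the \emph{new} one; bridging this gap is exactly where the Gaussian structure is used. For linear-Gaussian policies $p_t(.|\s)=\Normal(K_p\s+\boldsymbol{k}_p,\Sigma_p)$ and $q_t(.|\s)=\Normal(K_q\s+\boldsymbol{k}_q,\Sigma_q)$, the map $\s\mapsto\KLM{p_t(.|\s)}{q_t(.|\s)}$ is the nonnegative quadratic $c+\tfrac12(\Delta K\s+\Delta\boldsymbol{k})^\top\Sigma_q^{-1}(\Delta K\s+\Delta\boldsymbol{k})$ with $\Delta K=K_p-K_q$, $\Delta\boldsymbol{k}=\boldsymbol{k}_p-\boldsymbol{k}_q$, and $c=\KLM{\Normal(0,\Sigma_p)}{\Normal(0,\Sigma_q)}\ge0$; its expectation under a Gaussian state distribution $\Normal(\boldsymbol\mu,S)$ is $c+\tfrac12(\Delta K\boldsymbol\mu+\Delta\boldsymbol{k})^\top\Sigma_q^{-1}(\Delta K\boldsymbol\mu+\Delta\boldsymbol{k})+\tfrac12\tr(\Sigma_q^{-1}\Delta K\,S\,\Delta K^\top)$. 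Evaluating at $\Normal(\boldsymbol\mu,S)=q_t$, all three summands are nonnegative, so the constraint forces $c\le\epsilon$, $(\Delta K\boldsymbol\mu_q+\Delta\boldsymbol{k})^\top\Sigma_q^{-1}(\Delta K\boldsymbol\mu_q+\Delta\boldsymbol{k})=\mathcal{O}(\epsilon)$, and — provided the state covariance $S_q$ of $q_t$ is positive definite — $\|\Delta K\|^2=\mathcal{O}(\epsilon)$, hence also $\|\Delta\boldsymbol{k}\|=\mathcal{O}(\sqrt\epsilon)$.

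It then remains to estimate the change of measure. Subtracting the closed-form expectation under $q_t=\Normal(\boldsymbol\mu_q,S_q)$ from that under $p_t=\Normal(\boldsymbol\mu_p,S_p)$ leaves only $\tfrac12[(\Delta K\boldsymbol\mu_p+\Delta\boldsymbol{k})^\top\Sigma_q^{-1}(\Delta K\boldsymbol\mu_p+\Delta\boldsymbol{k})-(\Delta K\boldsymbol\mu_q+\Delta\boldsymbol{k})^\top\Sigma_q^{-1}(\Delta K\boldsymbol\mu_q+\Delta\boldsymbol{k})]+\tfrac12\tr(\Sigma_q^{-1}\Delta K(S_p-S_q)\Delta K^\top)$. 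The inductive hypothesis $\KLM{p_t}{q_t}=\mathcal{O}(\epsilon)$ forces, for Gaussians, $\|\boldsymbol\mu_p-\boldsymbol\mu_q\|=\mathcal{O}(\sqrt\epsilon)$ and $\|S_p-S_q\|=\mathcal{O}(\sqrt\epsilon)$ (again using positive-definiteness and boundedness of $S_q$); combined with $\|\Delta K\|=\mathcal{O}(\sqrt\epsilon)$ and $\|\Delta K\boldsymbol\mu_q+\Delta\boldsymbol{k}\|=\mathcal{O}(\sqrt\epsilon)$ from the previous paragraph, both correction terms are $\mathcal{O}(\epsilon^{3/2})$, so $\EE_{\s\sim p_t}[\KLM{p_t(.|\s)}{q_t(.|\s)}]\le\epsilon+\mathcal{O}(\epsilon^{3/2})=\mathcal{O}(\epsilon)$. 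Inserting this into the recursion closes the induction, and since $T$ is finite the accumulated constant stays finite. I expect the change-of-measure estimate to be the crux: it is the only step where the Gaussianity of the \emph{state} distributions is genuinely needed, and it is circularly tied to the very conclusion being proved — which is what dictates the inductive structure. A secondary technical point, easily overlooked, is that all the hidden constants are controlled only if the state and action covariances are uniformly non-degenerate (bounded away from $0$ and from $\infty$), a regularity condition that in the algorithm would also have to hold across iterations.
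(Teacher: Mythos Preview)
Your proof is correct and follows the same inductive skeleton as the paper: base case $p_1=q_1$, a one-step recursion $\KLM{p_{t+1}}{q_{t+1}}\le\epsilon_t+\EE_{\s\sim p_t}[\KLM{p_t(.|\s)}{q_t(.|\s)}]$ obtained via the chain rule and data processing (the paper phrases it as the log-sum inequality, which amounts to the same thing), and then a Gaussian computation to transfer the expected policy KL from the old state law $q_t$ to the new one $p_t$.

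The only substantive difference is in how that transfer is carried out. You extract parameter-level bounds $\|\Delta K\|=\mathcal{O}(\sqrt\epsilon)$, $\|\boldsymbol\mu_p-\boldsymbol\mu_q\|=\mathcal{O}(\sqrt\epsilon)$, $\|S_p-S_q\|=\mathcal{O}(\sqrt\epsilon)$ and combine them, which---as you rightly flag---hides constants depending on the conditioning of the state and action covariances. The paper instead stays at the level of traces: writing the state-dependent part of the policy KL as $\s^TM\s$ with $M=\Delta K^\top\Sigma_q^{-1}\Delta K$, it bounds $\boldsymbol\mu_p^TM\boldsymbol\mu_p$ and $\tr(MS_p)$ directly via Cauchy--Schwarz-type trace inequalities such as $\tr(MS_p)\le\tr(MS_q)\,\tr(S_q^{-1}S_p)$, together with the observation that $\tr(S_q^{-1}S_p)\le 4\epsilon_t+2d_s$ follows from $\KLM{p_t}{q_t}\le\epsilon_t$ alone. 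This yields the explicit estimate $\EE_{\s\sim p_t}[\KLM{p_t(.|\s)}{q_t(.|\s)}]\le 2\epsilon(3\epsilon_t+d_s+1)$, with constants depending only on the state dimension $d_s$, and in particular sidesteps the non-degeneracy assumption you identify at the end. Your argument buys a somewhat sharper correction order ($\mathcal{O}(\epsilon^{3/2})$ for the change-of-measure term) at the price of that extra regularity; the paper's buys explicit, dimension-only constants.
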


\begin{proof}
We will demonstrate the lemma by induction noting that for $t=1$ the state distributions are identical and hence their KL is zero. Assuming $\epsilon_t = \KLM{p_t}{q_t} = \mathcal{O}(\epsilon)$ as $\epsilon \to 0$, let us compute the KL between state distributions for $t+1$
\begin{align}
\KLM{p_{t+1}}{q_{t+1}} &= \int p_{t+1}(\s') \log \frac{p_{t+1}(\s')}{q_{t+1}(\s')},\notag\\
&\leq \iiint p_t(\s, \a)  p(\s'| \a, \s)  \log \frac{p_t(\s, \a)p(\s'| \a, \s)}{q_t(\s, \a)p(\s'| \a, \s)}\tag{log sum inequality},\\
 &= \int p_t(\s')\int p_t(\a|\s') \log \frac{p_t(\s')p_t(\a|\s')}{q_t(\s')q_t(\a|\s')},\notag\\
 &= \epsilon_t + \EE_{s\sim p_{t}}[\KLM{p_{t}(.|\s)}{q_{t}(.|\s)}]. \label{eq:state_plus}
\end{align}
Hence we have bounded the KL between state distributions at $t+1$ by the KL between state distributions and the expected KL between policies of the previous time-step $t$. Now we will express the KL between policies under the new state distributions, given by $\EE_{s\sim p_{t}}[\KLM{p_{t}(.|\s)}{q_{t}(.|\s)}]$, in terms of KL between policies under the previous state distribution, $\EE_{s\sim q_t}\left[\KLM{p_t(.|\s)}{q_t(.|\s)}\right]$ which is bounded during policy update by $\epsilon$, and $\KLM{p_t}{q_t}$. To do so, we will use the assumption that the state distribution and the policy are Gaussian and linear-Gaussian. The complete demonstration is given in Appendix \ref{sec:bounds}, and we only report the following result
\begin{align}
\EE_{s\sim p_{t}}[\KLM{p_{t}(.|\s)}{q_{t}(.|\s)}] &\leq 2\epsilon \left(3 \epsilon_t +  d_s + 1\right). \label{eq:kl_diff}
\end{align}
It is now easy to see that the combination of \eqref{eq:state_plus} and \eqref{eq:kl_diff} together with the induction hypothesis yields $\KLM{p_{t+1}}{q_{t+1}} = \mathcal{O}(\epsilon)$ as $\epsilon \to 0$.
\end{proof}

Finally, the combination of Lemma \ref{th:perf_kl} and Lemma \ref{th:kl_state} results in the following theorem, lower-bounding the change in policy return.

\begin{theorem}
\label{th:main}
If for every time-step the state distributions $p_t$ and $q_t$  are Gaussian and the policies $p_t(.|\s_t)$ and $q_t(.|\s_t)$ are linear-Gaussian and if $\EE_{s\sim q_t}\left[\KLM{p_t(.|\s)}{q_t(.|\s)}\right]\leq \epsilon$ for every time-step then \[J(p) - J(q) \geq \sum_{t=1}^{T}\EE_{s \sim q_t, a \sim p_t(. | \s)}\left[A_t^q(\s,\a)\right] - \sum_{t=1}^{T}\delta_t\mathcal{O}(\sqrt{\epsilon}).\]
\end{theorem}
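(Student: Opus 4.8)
The plan is to simply chain the two preceding lemmas, since Lemma \ref{th:perf_kl} already delivers a bound of the desired shape and Lemma \ref{th:kl_state} supplies exactly the missing estimate. Concretely, Lemma \ref{th:perf_kl} gives
\[
J(p) - J(q) \geq \sum_{t=1}^{T}\EE_{\s \sim q_t, \a \sim p_t(. | \s)}\left[A_t^q(\s,\a)\right] - 2\sum_{t=1}^{T}\delta_t\sqrt{\frac{\epsilon_t}{2}},
\]
with $\epsilon_t = \KLM{p_t}{q_t}$ the KL between the state distributions at time-step $t$. The only quantity not yet controlled in terms of the trust-region parameter $\epsilon$ is $\epsilon_t$, and bounding it is precisely the content of Lemma \ref{th:kl_state}: under the Gaussianity of the $p_t,q_t$, the linear-Gaussian form of the policies, and the per-time-step expected-KL constraint $\EE_{s\sim q_t}[\KLM{p_t(.|\s)}{q_t(.|\s)}]\leq\epsilon$, one has $\epsilon_t = \mathcal{O}(\epsilon)$ as $\epsilon\to 0$ for every $t$.

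First I would invoke Lemma \ref{th:kl_state} to substitute $\epsilon_t = \mathcal{O}(\epsilon)$; taking square roots gives $\sqrt{\epsilon_t/2} = \mathcal{O}(\sqrt{\epsilon})$ for each $t$. Plugging this into the inequality above and absorbing the factor $2$ and the ($\delta_t$-independent) hidden constants into the $\mathcal{O}(\cdot)$ notation yields
\[
J(p) - J(q) \geq \sum_{t=1}^{T}\EE_{\s \sim q_t, \a \sim p_t(. | \s)}\left[A_t^q(\s,\a)\right] - \sum_{t=1}^{T}\delta_t\,\mathcal{O}(\sqrt{\epsilon}),
\]
which is the claimed bound. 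No Cauchy–Schwarz or Jensen step is required to move between a sum of square roots and a square root of a sum, because Lemma \ref{th:perf_kl} is already stated in the convenient additive form $\sum_t \delta_t\sqrt{\epsilon_t/2}$, so the substitution is term-by-term.

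The step I expect to carry the real weight is not this final assembly — which is essentially a one-line corollary — but rather the $\mathcal{O}(\epsilon)$ estimate of Lemma \ref{th:kl_state}, whose induction over $t$ (via \eqref{eq:state_plus} and \eqref{eq:kl_diff}) produces a hidden constant that grows with the horizon $T$. Since $T$ is fixed and finite this is harmless for the theorem as stated, but it is worth flagging that the bound is not horizon-free, and that finiteness of $T$ is the only global assumption actually used in the combination. I would also remark in passing that $\delta_t$ is kept symbolic here and is finite under the working assumptions (it involves a maximum over $\s$ of an expected advantage), and that none of the algorithmic specifics — the quadratic $\tilde Q_t$, the closed-form policy update, the dual minimization — enter the argument: Theorem \ref{th:main} is purely a consequence of Lemmas \ref{th:perf_kl} and \ref{th:kl_state}.
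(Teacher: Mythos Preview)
Your proposal is correct and matches the paper's own argument: the theorem is stated as a direct combination of Lemma~\ref{th:perf_kl} and Lemma~\ref{th:kl_state}, and you invoke them in exactly that order with the term-by-term substitution $\sqrt{\epsilon_t/2}=\mathcal{O}(\sqrt{\epsilon})$. Your side remarks about the horizon-dependent hidden constant and the purely structural nature of the result are accurate but go beyond what the paper records.
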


Theorem \ref{th:main} shows that we are able to obtain similar bounds than those derived in \citep{Schulman15} for our continuous state-action trajectory optimization setting with a bounded KL policy update in expectation under the previous state distribution. While, it is not easy to apply Theorem \ref{th:main} in practice to choose an appropriate step-size $\epsilon$ since $A_t^q(\s,\a)$ is generally only known approximately, Theorem \ref{th:main} still shows that our constrained policy update will result in small changes in the overall behavior of the policy between successive iterations which is crucial in the approximate RL setting.


\section{Related Work}
\label{sec:soa}
In the Approximate Policy Iteration scheme \citep{Szepesvari10}, policy updates can potentially decrease the expected reward, leading to policy oscillations \citep{Wagner11}, unless the updated policy is 'close' enough to the previous one \citep{Kakade02}. Bounding the change between $\pi^{i}$ and $\pi^{i+1}$ during the policy update step is thus a well studied idea in the Approximate Policy Iteration literature. Already in 2002, \citeauthor{Kakade02} proposed the Conservative Policy Iteration (CPI) algorithm where the new policy $\pi^{i+1}$ is obtained as a mixture of $\pi^{i}$ and the greedy policy w.r.t. $Q^i$. The mixture parameter is chosen such that a lower bound of $J(\pi^{i+1}) - J(\pi^{i})$ is positive and improvement is guaranteed. However, convergence was only asymptotic and in practice a single policy update would require as many samples as other algorithms would need to find the optimal solution \citep{Pirotta13Safe}. \cite{Pirotta13Safe} refined the lower bound of CPI by adding an additional term capturing the closeness between policies (defined as the matrix norm of the difference between the two policies), resulting in a more aggressive updates and better experimental results. However, both approaches only considered discrete action spaces. \cite{Pirotta13Adaptive} provide an extension to continuous domains but only for single dimensional actions.

When the action space is continuous, which is typical in e.g. robotic applications, using a stochastic policy and updating it under a KL constraint to ensure 'closeness' of successive policies has shown several empirical successes \citep{Daniel2012, Levine14Model, Schulman15}. However, only an empirical sample estimate of the objective function is generally optimized \citep{Peters2010, Schulman15}, which typically requires a high number of samples and precludes it from a direct application to physical systems. The sample complexity can be reduced when a model of the dynamics is available \citep{Levine14Model} or learned \citep{Levine14}. In the latter work, empirical evidence suggests that good policies can be learned on high dimensional continuous state-action spaces with only a few 
hundred episodes. The counter part being that time-dependent dynamics are assumed to be linear, which is a simplifying assumption in many cases. Learning more sophisticated models using for example Gaussian Processes was experimented by \cite{Deisenroth2011} and \cite{Pan14} in the Policy Search and Trajectory Optimization context, but it is still considered to be a challenging task, see \cite{Deisenroth2013}, chapter 3.  

The policy update in Eq. \eqref{eq:piUpdateGeneral} resembles that of \citep{Peters2010, Daniel2012} with three main differences. First, without the assumption of a quadratic Q-Function, an additional weighted maximum likelihood step is required for fitting $\pi^{i+1}$ to weighted samples as in the r.h.s of Eq. \eqref{eq:piUpdateGeneral}, since this policy might not be of the same policy class. As a result, the KL between $\pi^{i}$ and $\pi^{i+1}$ is no longer respected. Secondly, we added an entropy constraint in order to cope with the inherent non-stationary objective function maximized by the policy (Eq. \ref{eq:maxQ}) and to ensure that exploration is sustained, resulting in better quality policies. Thirdly, their sample based optimization algorithm requires the introduction of a number of dual variables typically scaling at least linearly with the dimension of the state space, while we only have to 
optimize over two dual variables irrespective of the state space. 

Most trajectory optimization methods are based on stochastic optimal control. These methods linearize the system dynamics and update the policy in closed form as a LQR. Instances of such algorithms are for example iLQG \citep{Todorov2006}, DDP \citep{Theodorou2010a}, AICO \citep{Toussaint2009} and its more robust variant \citep{Ruckert14} and the trajectory optimization algorithm used in the GPS algorithm \citep{Levine14}. These methods share the same assumptions as MOTO for $\rho_t^i$ and $\pi^i_t$ respectively considered to be of Gaussian and linear-Gaussian form. These methods face issues in maintaining the stability of the policy update and, similarly to MOTO, introduce additional constraints and regularizers to their update step. DDP, iLQG and AICO regularize the update by introducing a damping term in the matrix inversion step, while GPS uses a KL bound on successive trajectory distributions. However, as demonstrated in Sec. \ref{sec:xp}, the quadratic approximation of the Q-Function performed by MOTO seems to be empirically less detrimental to the quality of the policy update than the linearization of the system dynamics around the mean trajectory performed by related approaches. 


\section{Experimental Validation}
\label{sec:xp}
MOTO is experimentally validated on a set of multi-link swing-up tasks and on a robot table tennis task. The experimental section aims at analyzing the proposed algorithm from four different angles: i) the quality of the returned policy comparatively to state-of-the-art trajectory optimization algorithms, ii) the effectiveness of the proposed variance reduction and sample reuse schemes, iii) the contribution of the added entropy constraint during policy updates in finding better local optima and iv) the ability of the algorithm to scale to higher dimensional problems. The experimental section concludes with a comparison to TRPO \citep{Schulman15}, a state-of-the-art reinforcement learning algorithm that bounds the KL between successive policies; showcasing settings in which the time-dependent linear-Gaussian policies used by MOTO are a suitable alternative to neural networks.

\subsection{Multi-link Swing-up Tasks}
\label{sec:swing}
\begin{figure*}[t!]
    \centering
    \subfigure[]{\includegraphics[width=0.31\textwidth]{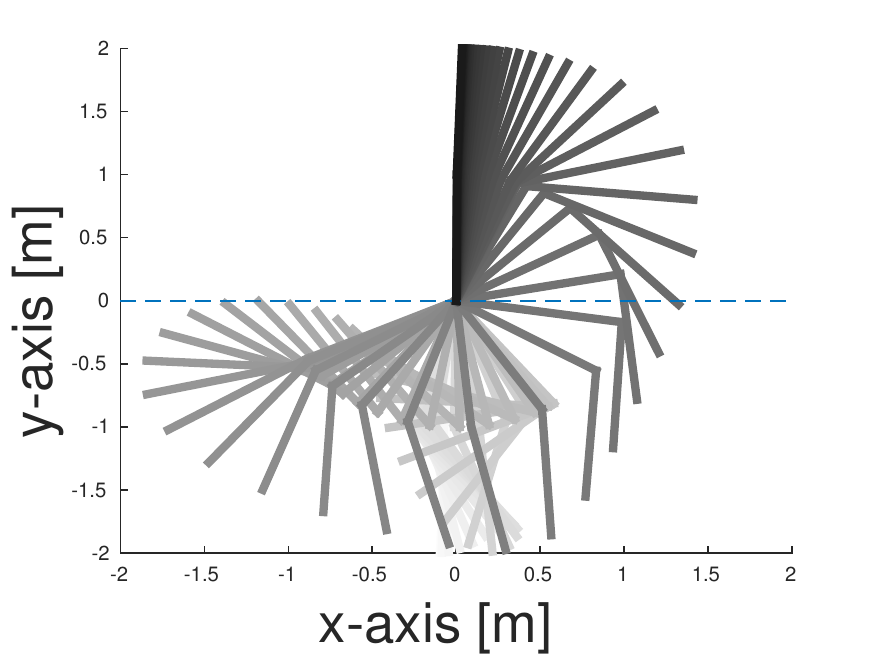}}
    ~ 
            \subfigure[]{\includegraphics[width=0.31\textwidth, height = 40mm]{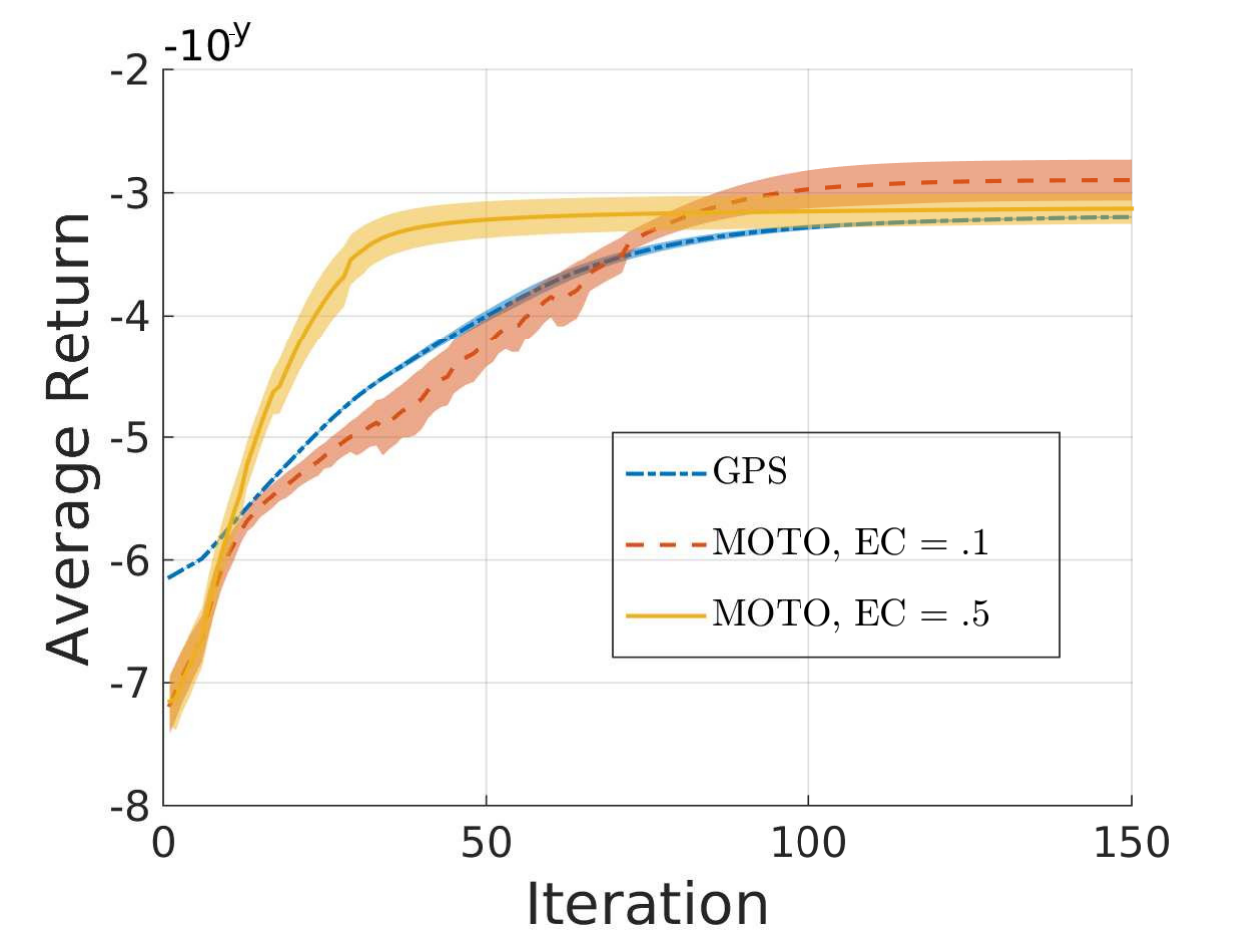}}
    ~ 
       \subfigure[]{\includegraphics[width=0.31\textwidth]{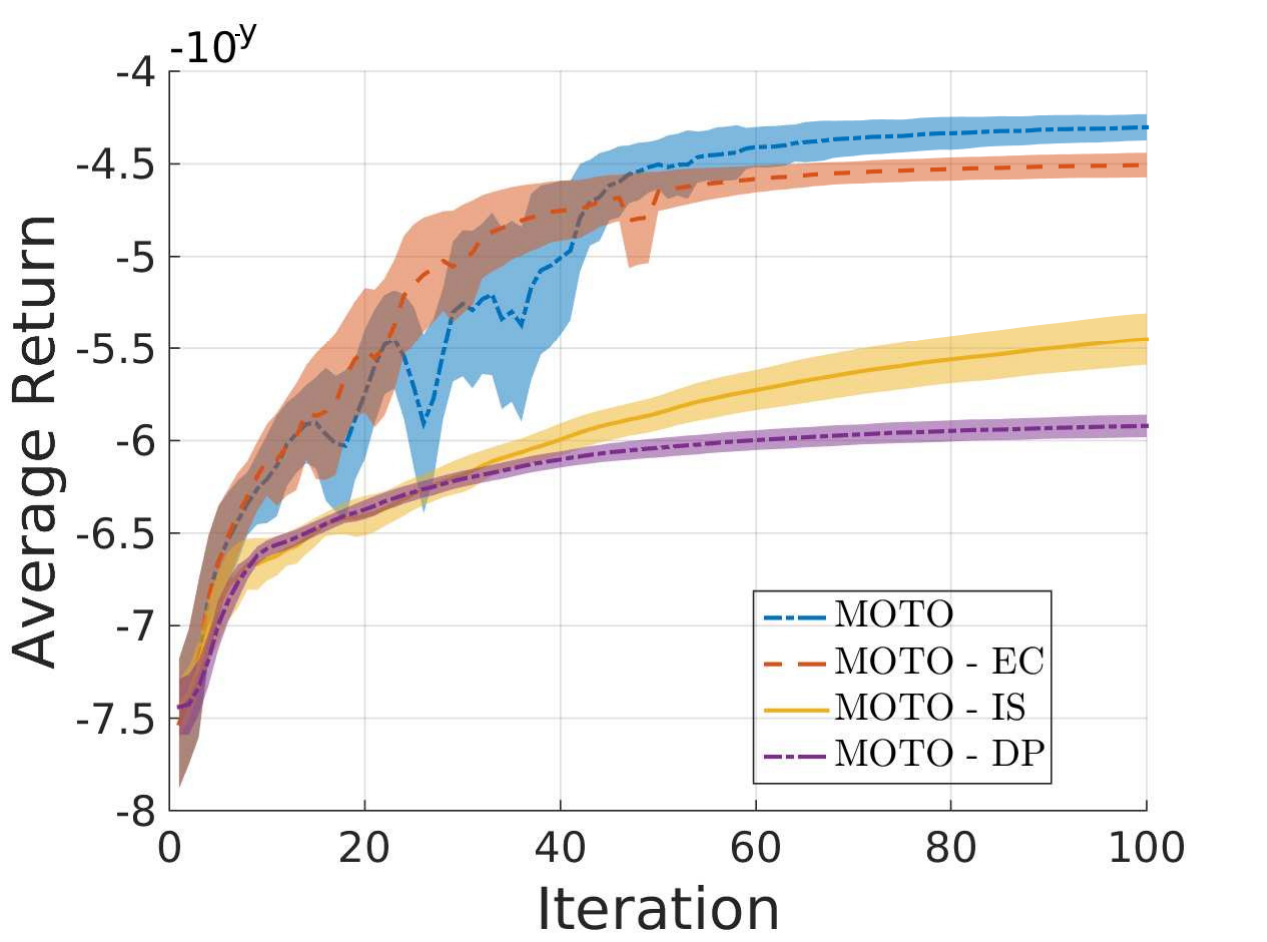}}
    \caption{ \small a) Double link swing-up policy found by MOTO. b) Comparison between GPS and MOTO on the double link swing-up task (different torque limits and state costs are applied compared to c) and f). 
    c) MOTO and its variants on the double link swing-up task: MOTO without the entropy constraint (EC), importance sampling (IS ) or dynamic programming (DP). All plots are averaged over 15 runs. \label{fig:swingTaska}}
 \end{figure*}

\begin{figure*}[t!]
    \centering
            \subfigure[]{\includegraphics[width=0.32\textwidth]{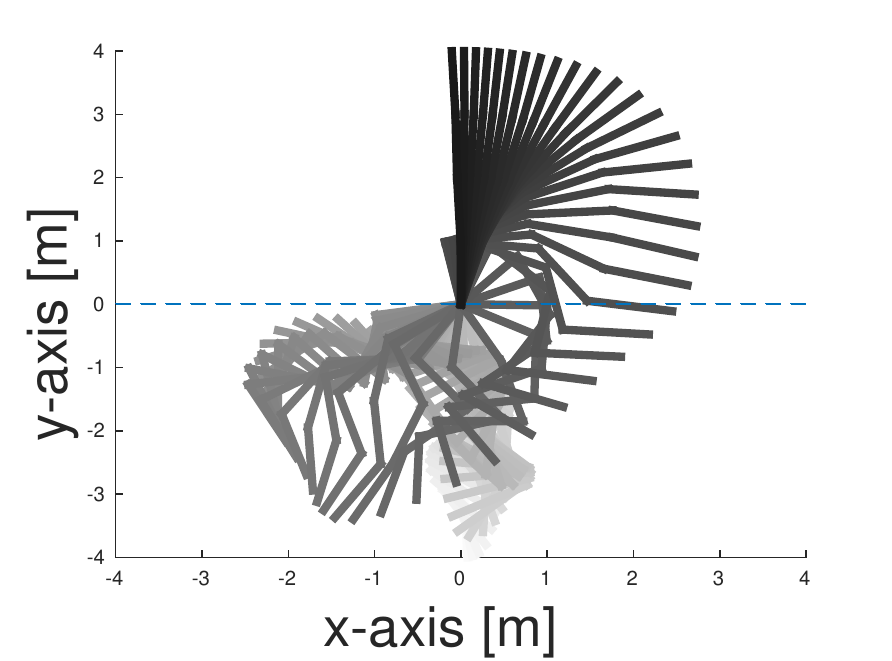}}
    \subfigure[]{\includegraphics[width=0.32\textwidth]{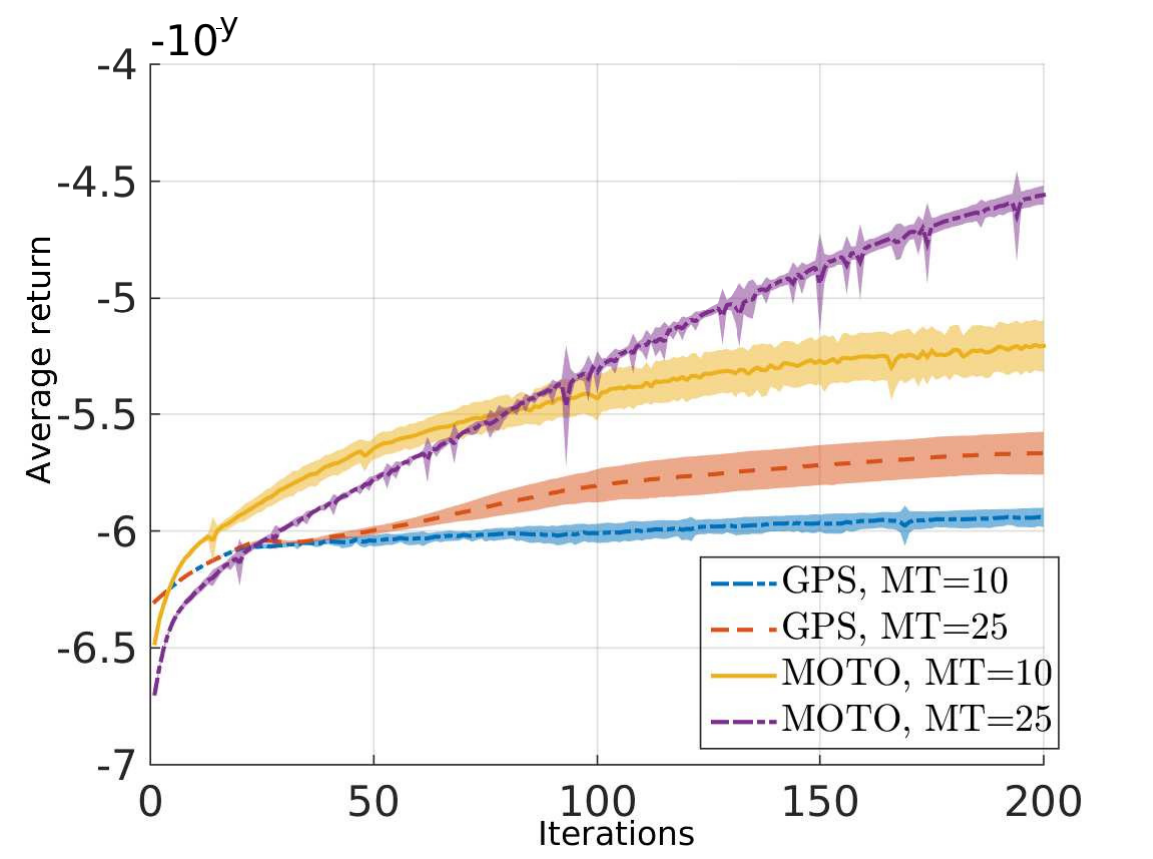}}
 \subfigure[]{\includegraphics[width=0.32\textwidth,  height=4cm]{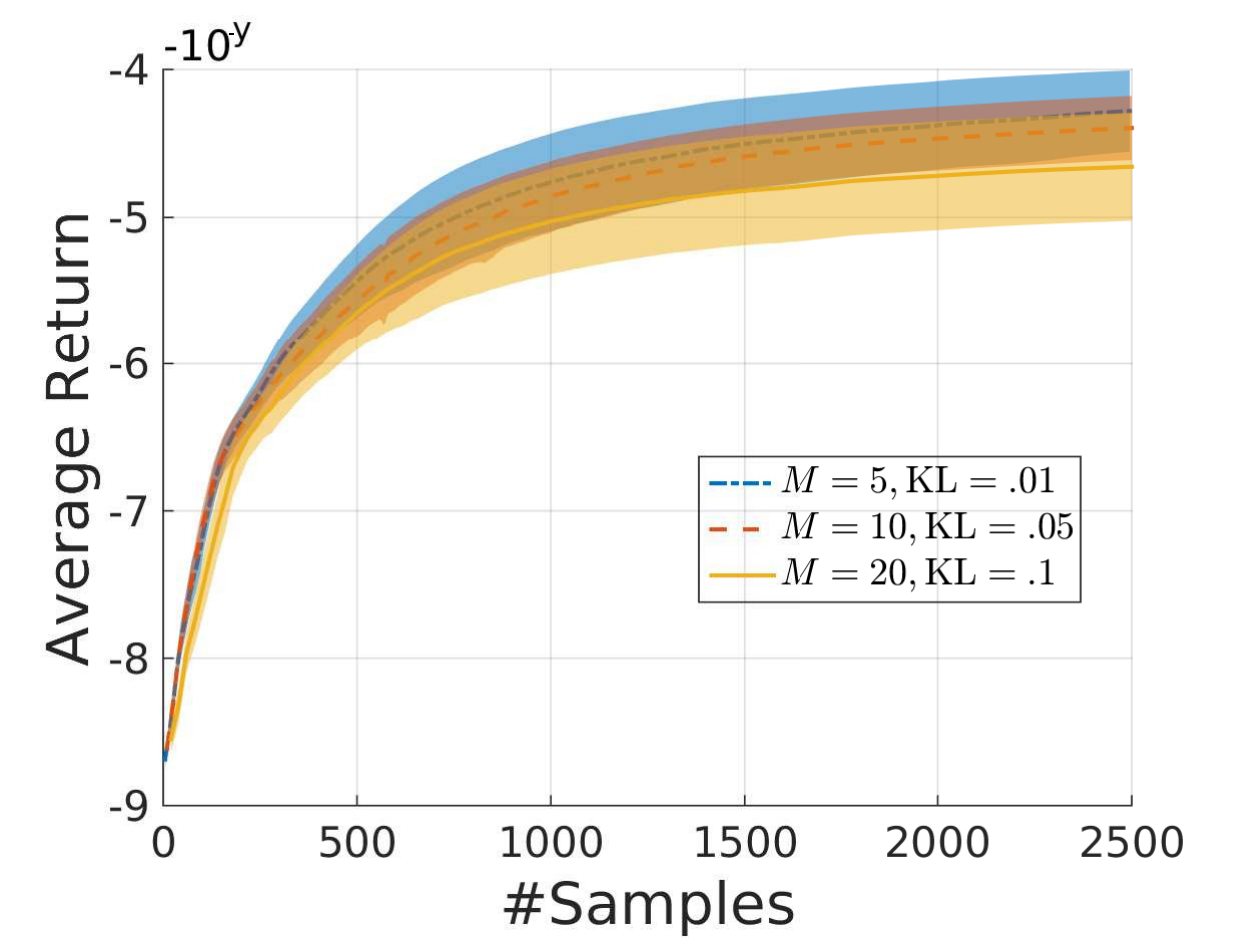}}

~ 
    ~ 
\caption{ \small a) Quad link swing-up policy found by MOTO. 
    b) Comparison between GPS and MOTO on the quad link swing-up task with restricted joint limits and two different torque limits. c) MOTO on the double link swing-up task for 
    varying number of rollouts per episode and step-sizes.  All plots are averaged over 15 runs.}\label{fig:swingTaskb}
\end{figure*}

A set of swing-up tasks involving a multi-link pole with respectively two and four joints (Fig. \ref{fig:swingTaska}.a and \ref{fig:swingTaskb}.a) is considered in this section. 
The set of tasks includes several variants with different torque and joint limits, introducing additional non-linearities in the dynamics and resulting in 
more challenging control problems for trajectory optimization algorithms based on linearizing the dynamics. The state space consists of the joint positions and 
joint velocities while the control actions are the motor torques. In all the tasks, the reward function is split between an action cost and a state cost. 
The action cost is constant throughout the time-steps while the state cost is time-dependent and is equal to zero for all but the 20 last time-steps. 
During this period, a quadratic cost penalizes the state for not being the null vector, i.e. having zero velocity and reaching the upright position. 
Examples of successful swing-ups learned with MOTO are depicted in Fig. \ref{fig:swingTaska}.a and \ref{fig:swingTaskb}.a.\\

MOTO is compared to the trajectory optimization algorithm proposed in \cite{Levine14}, 
that we will refer to as GPS.\footnote{This is a slight abuse of notation as the GPS algorithm of \citep{Levine14} additionally feeds the optimized trajectory to an upper level policy. 
However, in this article, we are only interested in the trajectory optimization part.} We chose to compare MOTO and GPS as both use a KL constraint to bound the change in policy. As such, the choice of approximating the Q-Function with time-dependent quadratic models (as done in MOTO) in order to solve the policy update instead of linearizing the system dynamics around the mean trajectory (as done in most trajectory optimization algorithms) is better isolated.
GPS and MOTO both use a time-dependent linear-Gaussian policy. In order to learn the linear model of the system dynamics, GPS reuses samples from different time-steps by learning a Gaussian mixture model on 
all the samples and uses this model as a prior to learn a joint Gaussian distribution $p(\s_t,\a_t,\s_{t+1})$ for every time-step. To single out the choice of linearizing the dynamics model or lack thereof from the different approaches to sample reuse, we give to both algorithm a high number of samples (200 and 400 rollouts per iteration for the double and quad link respectively) and bypass any form of sample reuse for both algorithms. \\

Fig. \ref{fig:swingTaska}.b compares GPS to two configurations of MOTO on the double-link swing up task. The same initial policy and step-size $\epsilon$ are used by both algorithm. 
However, we found that GPS performs better with a smaller initial variance, as otherwise actions quickly hit the torque limits making the dynamics modeling harder. 
Fig. \ref{fig:swingTaska}.b shows that even if the dynamics of the system are not linear, GPS manages to improve the policy return, and eventually finds a swing-up policy. 
The two configurations of MOTO have an entropy reduction constant $\beta_0$ of $.1$ and $.5$. 
The effect of the entropy constraint is similar to the one observed in the stochastic search domain by \citep{Abdolmaleki15}. 
Specifically, a smaller entropy reduction constant $\beta_0$ results in an initially slower convergence but ultimately 
leads to higher quality policies. In this particular task, MOTO with $\beta_0 = .1$ manages to slightly outperform GPS. \\

Next, GPS and MOTO are compared on the quad link swing-up task. We found this task to be significantly more challenging than the 
double link and to increase the difficulty further, soft joint limits are introduced on the three last joints in the following way: 
whenever a joint angle exceeds in absolute value the threshold $\frac{2}{3}\pi$, 
the desired torque of the policy is ignored in favor of a linear-feedback controller that aims at pushing 
back the joint angle within the constrained range. As a result, Fig. \ref{fig:swingTaskb}.b shows that GPS can barely improve its 
average return (with the torque limits set to 25, as in the double link task.) while MOTO performs significantly better. 
Finally, the torque limits are reduced even further but MOTO still manages to find a swing-up policy as demonstrated by Fig. \ref{fig:swingTaskb}.a. \\

In the last set of comparisons, the importance of each of the components of MOTO is assessed on the double link experiment. 
The number of rollouts per iteration is reduced to $M = 20$. Fig. \ref{fig:swingTaska}.c shows that: i) the entropy constraint provides an improvement on 
the quality of the policy in the last iterations in exchange of a slower initial progress, ii) importance sampling greatly helps 
in speeding-up the convergence and iii) the Monte-Carlo estimate of $\hat{Q}_i^t$ is not adequate for the 
smaller number of rollouts per iterations, which is further exacerbated by the fact that sample reuse of transitions from different time-steps is not possible with the Monte-Carlo estimate. \\

Finally, we explore on the double-link swing-up task several values of $M$, trying to find the balance between performing a small number of rollouts per iterations with a small step-size $\epsilon$ versus having a large number of rollouts for the policy evaluation that would allow to take larger update steps.
To do so, we start with an initial $M = 20$ and successively divide this number by two until $M = 5$. In each case, the entropy reduction constant is set such that,
for a similar number of rollouts, the entropy is reduced by the same amount, while we choose $\gamma'$, the discount of the state sample weights as $\gamma' = \gamma ^{M/M'}$ to yield again a similar sample decay after the 
same number of rollouts have been performed. Tuning $\epsilon$ was, however, more complicated and we tested several values on 
non-overlapping ranges for each $M$ and selected the best one. Fig. \ref{fig:swingTaskb}.c shows that, on the double link swing-up task, 
a better sample efficiency is achieved with a smaller $M$. However, the improvement becomes negligible from $M = 10$ to $M = 5$. 
We also noticed a sharp decrease in the number of effective samples when $M$ tends to 1. In this limit case, the complexity 
of the mixture policy $z^{1:i}$ in the denominator of the importance ratio  increases with the decrease of $M$ and might become a poor representation of the data set. 
Fitting a simpler state-action distribution that is more representative of the data can be the subject of future work 
in order to further improve the sample efficiency of the algorithm, which is crucial for applications on physical systems.

\subsection{Robot Table Tennis}
\begin{figure*}[t!]
    \centering
    \subfigure[]{\includegraphics[width=0.32\textwidth]{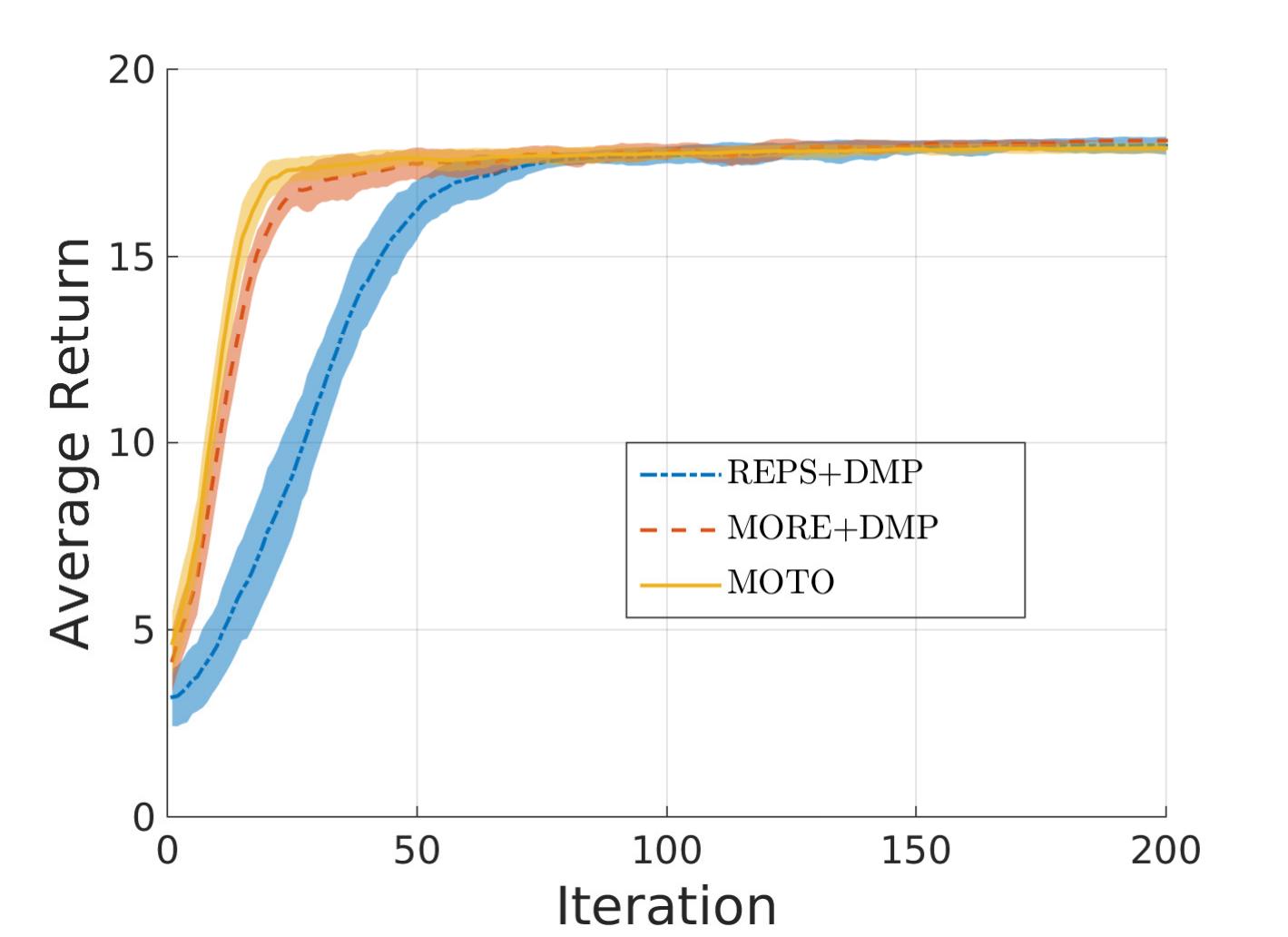}}
    ~ 
            \subfigure[]{\includegraphics[width=0.32\textwidth, height = 40mm]{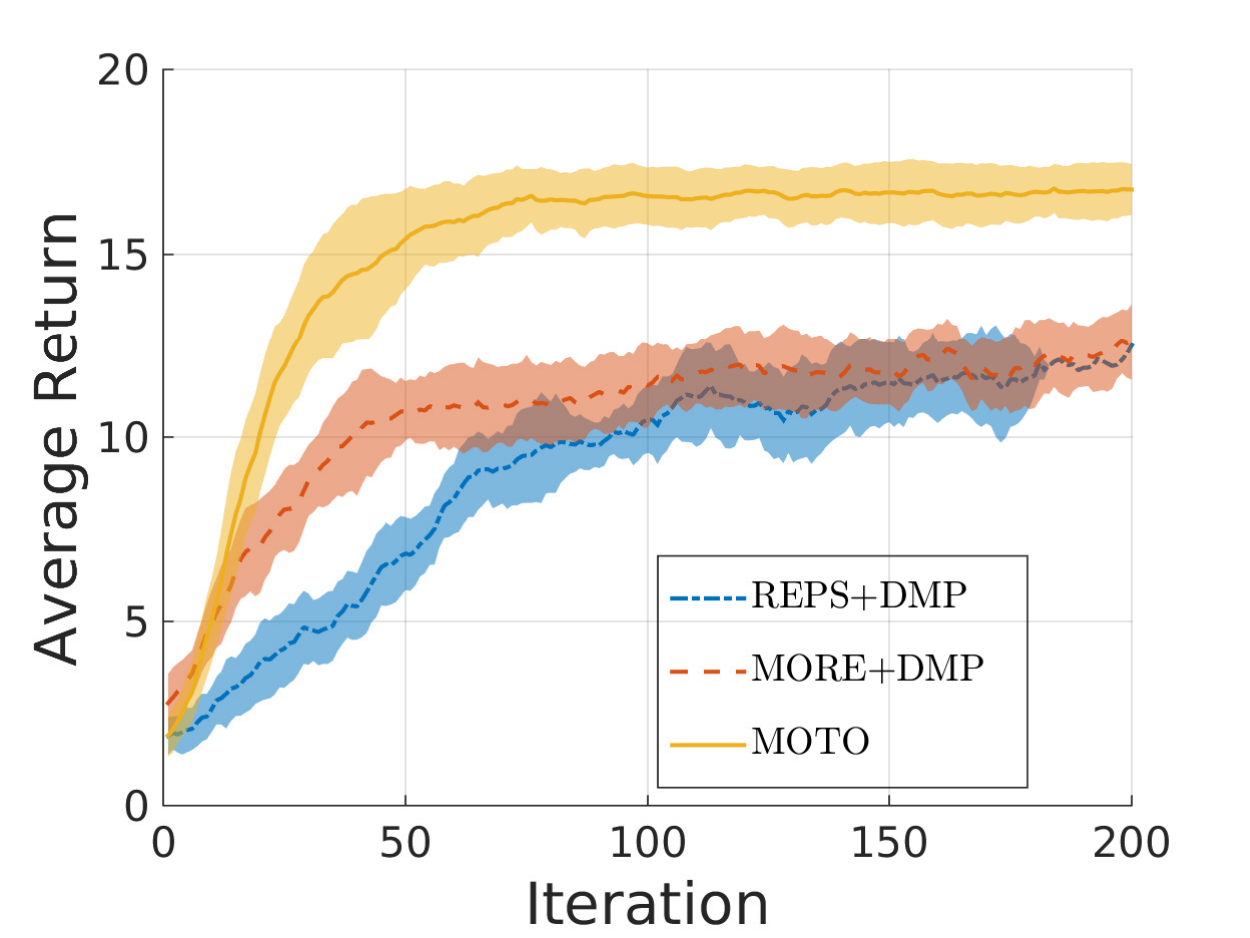}}
            \subfigure[]{\includegraphics[width=0.32\textwidth, height = 40mm]{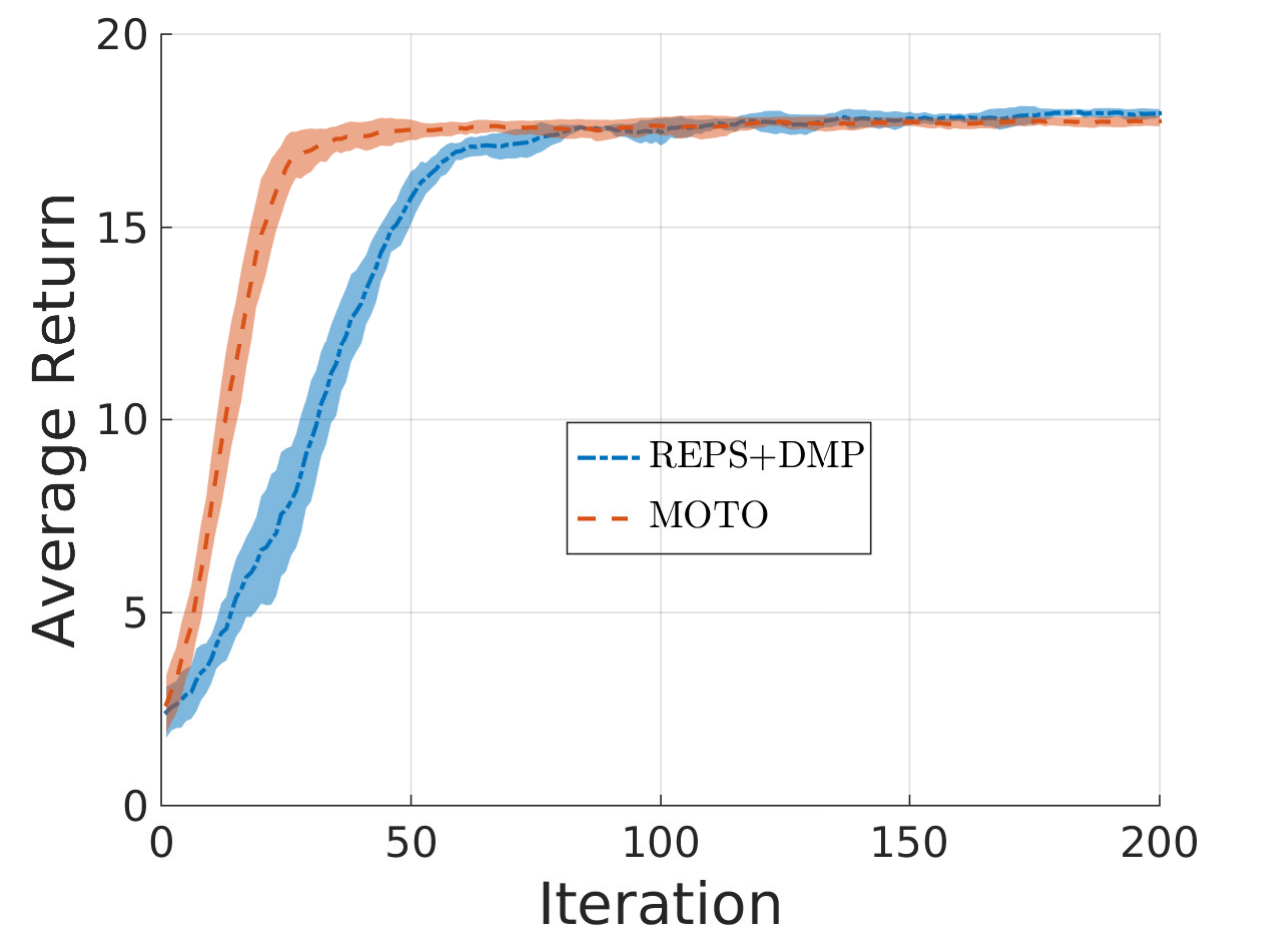}}
    ~ 
       
    \caption{ \small a) Comparison on the robot table tennis task with no noise on the initial velocity of the ball. b) Comparison on the robot table tennis task with Gaussian noise during the ball bounce on the table. c) Comparison on the robot table tennis task with initial velocity sampled uniformly in a 15cm range.}\label{fig:ttRes}
\end{figure*}

The considered robot table tennis task consists of a simulated robotic arm mounted on a floating base, having a racket on the end effector. The task of the robot is to return incoming balls using a forehand strike to the opposite side of the table (Fig. \ref{fig:ttSetting}). The arm has 9 degrees of freedom comprising the six joints of the arm and the three linear joints of the base allowing (small) 3D movement. Together with the joint velocities and the 3D position of the incoming ball, the resulting state space is of dimension $d_s = 21$ and the action space is of dimension $d_a = 9$ and consists of direct torque commands. 

\begin{figure}
\centering
\includegraphics[width=0.95\textwidth]{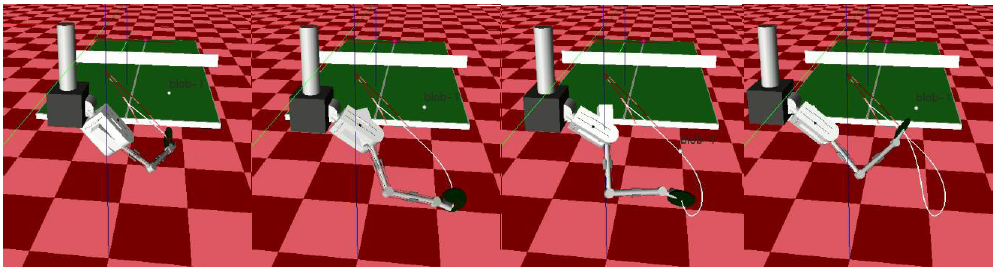}
	\caption{\small{Robot table tennis setting and a forehand stroke learned by MOTO upon a spinning ball.}}
    \label{fig:ttSetting}
\end{figure}

We use the analytical player of \cite{Muelling2011} to generate a single forehand stroke, which is subsequently used to learn from demonstration the initial policy $\pi^1$. The analytica player comprises a waiting phase (keeping the arm still), a preparation phase, a hitting phase and a return phase, which resets the arm to the waiting position of the arm. Only the preparation and the hitting phase are replaced by a learned policy. The total control time for the two learned phases is of 300 time-steps at 500hz, although for the MOTO algorithm we subsequently divide the control frequency by a factor of 10 resulting in a time-dependent policy of 30 linear-Gaussian controllers.

The learning from demonstration step is straightforward and only consists in averaging the torque commands of every 10 time-steps and using these quantities as the initial bias for each of the 30 controllers. Although this captures the basic template of the forehand strike, no correlation between the action and the state (e.g. the ball position) is learned from demonstration as the initial gain matrix $K$ for all the time-steps is set to the null matrix. Similarly, the exploration in action space is uninformed and initially set to the identity matrix.

Three settings of the task are considered, a noiseless case where the ball is launched with the same initial velocity, a varying context setting where the initial velocity is sampled uniformly within a fixed range and the noisy bounce setting where a Gaussian noise is added to both the x and y velocities of the ball upon bouncing on the table, to simulate the effect of a spin.

We compare MOTO to the policy search algorithm REPS 
 \citep{Kupcsik13} and the stochastic search algorithm MORE \citep{Abdolmaleki15} that shares a related information-theoretic update. Both algorithms will optimize the parameters of a Dynamical Movement Primitive (DMP) \citep{Ijspeert2003}. A DMP is a non-linear attractor system commonly used in robotics. The DMP is initialized from the same single trajectory and the two algorithm will optimize the goal joint positions and velocities of the attractor system. Note that the DMP generates a trajectory of states, which will be tracked by a linear controller using the inverse dynamics. While MOTO will directly output the torque commands and does not rely on this model.

Fig. \ref{fig:ttRes}.a and \ref{fig:ttRes}.c show that our algorithm converges faster than REPS and to a smaller extent than MORE in both the noiseless and the varying context setting. This is somewhat surprising since MOTO with its time-dependent linear policy have a much higher number of parameters to optimize than the 18 parameters of the DMP's attractor. However, the resulting policy in both cases is slightly less good than that of MORE and REPS. Note that for the varying context setting, we used a contextual variant of REPS that learns a mapping from the initial ball velocity to the DMP's parameters. MORE, on the other hand couldn't be compared in this setting. Finally, Fig. \ref{fig:ttRes}.b shows that our policy is successfully capable of adapting to noise at ball bounce, while the other methods fail to do so since the trajectory of the DMP is not updated once generated. 

\subsection{Comparison to Neural Network Policies}
Recent advances in reinforcement learning using neural network policies and supported by the ability of generating and processing large amounts of data allowed impressive achievements such as playing Atari at human level \citep{Mnih15} or mastering the game of Go~\citep{Silver16}. On continuous control tasks, success was found by combining trajectory optimization and supervised learning of a neural network policy \citep{Levine14}, or by directly optimizing the policy's neural network using reinforcement learning~\citep{Lillicrap15, Schulman15}. The latter work, side-stepping trajectory optimization to directly optimize a neural network policy raises the question as to whether the linear-Gaussian policies used in MOTO and related algorithms provide any benefit compared to neural network policies.

To this end, we propose to compare on the multi-link swing-up tasks of Sec. \ref{sec:swing}, MOTO learning a time-dependent linear-Gaussian policy to TRPO \citep{Schulman15} learning a neural network policy. We chose TRPO as our reinforcement learning baseline for its state-of-the-art performance and because of its similar policy update than that of MOTO~(both bound the KL between successive policies). Three variants of TRPO are considered while for MOTO, we refrain from using importance sampling~(Sec.~\ref{sec:importance}) since similar techniques such as off-policy policy evaluation can be used for TRPO.

First, MOTO is compared to a default version of TRPO using OpenAI's baselines implementation \citep{baselines} where TRPO optimizes a neural network for both learning the policy and the V-Function. Default parameters are used except for the KL divergence constraint where we set $\epsilon=.1$ for TRPO to match MOTO's setting. Note that because the rewards are time-dependent~(distance to the upright position penalized only for the last 20 steps, see Sec. \ref{sec:swing}) we add time as an additional entry to the state description. Time entry is in the interval $[0, 1]$ (current time-step divided by horizon $T$) and is fed to both the policy and V-Function neural networks. This first variant of TRPO answers the question: is there any benefit for using MOTO with its time-dependent linear-Gaussian policy instead of a state-of-the-art deep RL implementation with a neural network policy.

The second considered baseline uses the same base TRPO algorithm but replaces the policy evaluation using a neural network V-Function with the same policy evaluation used by MOTO (Sec. \ref{sec:policyEval}), back-propagating a quadratic V-Function. In this variant of TRPO the time-entry is dropped for the V-Function. This second baseline better isolates the policy update, which is the core of both algorithms, from the learning of the V-Function which could be interchanged. 

Finally, we consider a third variant of TRPO that uses both the quadratic V-Function and a time-dependent linear-Gaussian policy with diagonal covariance matrix (standard formulation and implementation of TRPO does not support full covariance exploration noise). The time entry is dropped for both the V-Function and the policy in this third baseline. While both algorithms bound the KL divergence between successive policies, there are still a few differentiating factors between this third baseline and MOTO. First, TRPO bounds the KL of the whole policy while MOTO solves a policy update for each time-step independently (but still results in a well-founded approximate policy iteration algorithm as discussed in Sec.~\ref{sec:theo}). In practice the KL divergence upon update for every time-step for MOTO is often equal to $\epsilon$ and hence both MOTO and TRPO result in the same KL divergence of the overall policy~(in expectation of the state distribution) while the KL divergence of the sub-policies (w.r.t. the time-step) may vary. Secondly, MOTO performs a quadratic approximation of the Q-Function and solves the policy update exactly while TRPO performs a quadratic approximation of the KL constraint and solves the policy update using conjugate gradient descent. TRPO does not solve the policy update in closed form because it would require a matrix inversion and the matrix to invert has the dimensionality of the number of policy parameters. In contrast, MOTO can afford the closed form solution because the matrix to invert has the dimensionality of the action space which is generally significantly smaller than the number of policy parameters. 

Fig. \ref{fig:trpo} shows the learning performance of MOTO and three TRPO variants on the double link and quadruple link swing-up tasks~(Sec.~\ref{sec:swing}). In both tasks MOTO outperforms all three TRPO variants albeit when TRPO is combined with the quadratic V-Function (second variant), it initially outperforms MOTO on the double link swing-up task. The quadratic V-Function befits these two tasks in particular and the quadratic regulation setting more generally because the reward is a quadratic function of the state-action pair~(here the negative squared distance to the upright position and a quadratic action cost). However, MOTO makes better use of the task's nature and largely outperforms the third variant of TRPO despite having a similar policy evaluation step and using the same policy class. In conclusion, while neural networks can be a general purpose policy class demonstrating success on a wide variety of tasks, on specific settings such as on quadratic regulator tasks, trajectory-based policy optimization is able to outperform deep RL algorithms. MOTO in particular, which does not rely on a linearization of the dynamics around the mean trajectory is able to handle quadratic reward problems with highly non-linear dynamics such as the quadruple link swing-up task and outperform state-of-the-art trajectory optimization algorithms (Sec.~\ref{sec:swing}) as a result.

\begin{figure*}[t!]
    \centering
    \subfigure{\includegraphics[width=0.48\textwidth]{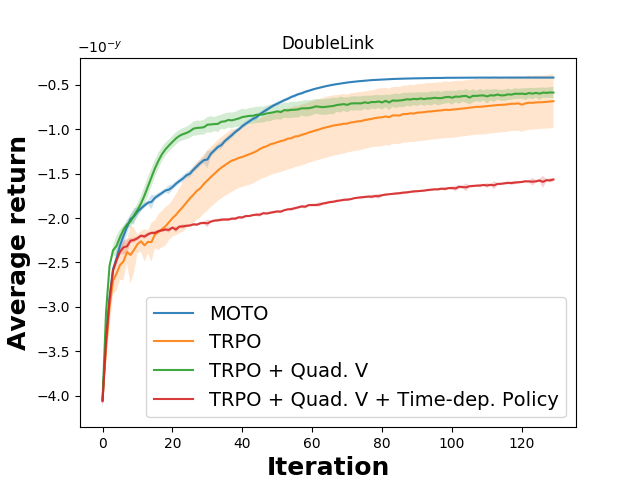}}
    ~ 
    \subfigure{\includegraphics[width=0.48\textwidth]{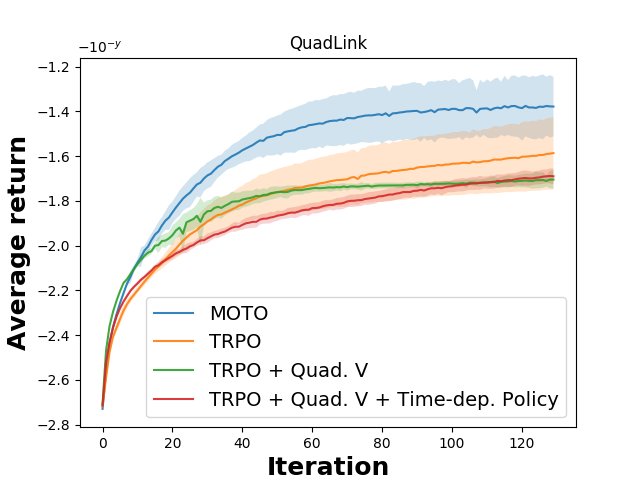}}
    ~ 
       
    \caption{ \small Comparisons on multi-link swing-up tasks between MOTO and TRPO. TRPO uses a neural network policy and V-Function (default) or a quadratic V-Function and a time-dependent linear-Gaussian policy as in MOTO. Quadratic V-Function is a good fit for such tasks and allows MOTO to outperform neural network policies on the double and quadruple link swing-up tasks. Rewards of the original task divided by $1\text{e}4$ to accomodate with the neural network V-Function. Plots averaged over 11 independent runs.}\label{fig:trpo}
\end{figure*}

\section{Conclusion}
We proposed in this article MOTO, a new trajectory-based policy optimization algorithm that does not rely on a linearization of the dynamics. Yet, an efficient policy update could be derived in closed form by locally fitting a quadratic Q-Function. We additionally conducted a theoretical analysis of the constrained optimization problem solved during the policy update. We showed that the upper bound on the expected KL between successive policies leads to only a small drift in successive state distributions which is a key property in the approximate policy iteration scheme.

The use of a KL constraint is widely spread including in other trajectory optimization algorithms. The experiments demonstrate however that our algorithm has an increased robustness towards non-linearities of the system dynamics when compared to a closely related trajectory optimization algorithm. It appears as such that the simplification resulting from considering a local linear approximation of the dynamics is more detrimental to the overall convergence of the algorithm than a local quadratic approximation of the Q-Function. 

On simulated robotics tasks, we demonstrated the merits of our approach compared to direct policy search algorithms that optimize commonly used low dimensional parameterized policies. The main strength of our approach is its ability to learn reactive policies capable of adapting to external perturbations in a sample efficient way. However, the exploration scheme of our algorithm based on adding Gaussian noise at every time-step is less structured than that of low dimensional parameterized policies and can be harmful to the robot. One of the main addition that would ease the transition from simulation to physical systems is thus to consider the safety of the exploration scheme of the algorithm. On a more technical note, and as the V-Function can be of any shape in our setting, the use of a more complex function approximator such as a deep network can be considered in future extensions to allow for a more refined bias-variance trade-off. 

\acks{This work was supported by the DFG Project LearnRobotS under the SPP 1527 Autonomous Learning.}

\newpage
\appendix
\section{Dual Function Derivations}
\label{sec:app_grad}
Recall the quadratic form of the Q-Function $\tilde{Q}_t(\s,\a)$ in the action $a$ and state $s$
\begin{align*}
\tilde{Q}_t(\s,\a) = \frac{1}{2}\a^TQ_{aa}\a + \a^TQ_{as}\s+\a^T\boldsymbol{q}_{a}+q(\s).
\end{align*}
The new policy $\pi'_t(\a|\s)$ solution of the constrained maximization problem is again of linear-Gaussian form and given by
\[\pi'_t(\a|\s) = \Normal(\a|FL\s+F\boldsymbol{f}, F(\eta^*+\omega^*)),\] 
such that the gain matrix, bias and covariance matrix of $\pi'_t$ are function of matrices $F$ and $L$ and vector $\boldsymbol{f}$ where
\begin{align*}
&F = (\eta^* \Sigma_t^{-1}-Q_{aa})^{-1}, & & L = \eta^*\Sigma_t^{-1}K_t+Q_{as},\\ & \boldsymbol{f} = \eta^*\Sigma_t^{-1}\boldsymbol{k}_t+\boldsymbol{q}_a,
\end{align*}
with $\eta^*$ and $\omega^*$ being the optimal Lagrange multipliers related to the KL and entropy constraints, obtained by minimizing the dual function 
\begin{multline*}
g_t(\eta, \omega) = \eta\epsilon - \omega\beta + (\eta+\omega)\int \tilde{\rho}_t(\s) \log\left( \int \pi(\a|\s)^{\eta/(\eta+\omega)}\exp\left(\tilde{Q}_t(\s,\a)/(\eta+\omega)\right) \right).
\end{multline*}

From the quadratic form of $\tilde{Q}_t(\s,\a)$ and by additionally assuming that the state distribution is approximated by $\tilde{\rho}_t(\s) = \Normal(\s|\boldsymbol{\mu}_s, \Sigma_s)$, the dual function simplifies to
\[g_t(\eta, \omega) = \eta\epsilon-\omega\beta+\boldsymbol{\mu}^T_s M\boldsymbol{\mu}_s + \tr(\Sigma_sM) + \boldsymbol{\mu}^T_s \boldsymbol{m} + m_0,\]
where $M$, $\boldsymbol{m}$ and $m_0$ are defined by
\begin{multline*}
M = \frac{1}{2}\left(L^TFL - \eta K_t^T \Sigma_t^{-1} K_t\right),\ \boldsymbol{m} = L^TF\boldsymbol{f} - \eta K_t^T \Sigma_t^{-1} \boldsymbol{k}_t,\\
m_0 = \frac{1}{2}(\boldsymbol{f}^TF\boldsymbol{f} - \eta  \boldsymbol{k}^T_t\Sigma_t^{-1} \boldsymbol{k}_t - \eta \log |2\pi\Sigma_t| + (\eta+\omega)\log|2\pi(\eta+\omega)F|).
\end{multline*}
The convex dual function $g_t$ can be efficiently minimized by gradient descent and the policy update is performed upon the computation of $\eta^*$ and $\omega^*$. The gradient w.r.t. $\eta$ and $\omega$ is given by\footnote{cst, lin, quad, $F$, $L$ and $\boldsymbol{f}$ all depend on $\eta$ and $\omega$. We dropped the dependency from the notations for compactness. $d_a$ is the dimensionality of the action.}

\begin{align*}
\frac{\partial g_t(\eta, \omega)}{\partial \eta} ={}& \text{cst} + \text{lin} + \text{quad}\\
\begin{split}
\text{cst} ={}& \epsilon - \frac{1}{2} \left( \boldsymbol{k}_t - F\boldsymbol{f}  \right)^T \Sigma_t^{-1}
\left( \boldsymbol{k}_t - F\boldsymbol{f}  \right) - \frac{1}{2}[\log|2\pi\Sigma_t| - \log|2\pi(\eta+\omega)F| \\& + (\eta+\omega)\tr(\Sigma_t^{-1}F) - d_a ].
\end{split}\\
\text{lin} ={}& \left((K_t - FL) \boldsymbol{\mu}_s\right)^T\Sigma_t^{-1}(F\boldsymbol{f}-\boldsymbol{k}_t).\\
\text{quad} = {}& \boldsymbol{\mu}_s^T(K_t+FL)^T\Sigma_t^{-1}(K_t+FL)\boldsymbol{\mu}_s +\tr(\Sigma_s(K_t+FL)^T\Sigma_t^{-1}(K_t+FL))\\
\frac{\partial g_t(\eta, \omega)}{\partial \omega} ={}& -\beta + \frac{1}{2} (d_a + \log|2\pi(\eta+\omega)F|).
\end{align*}

\section{Bounding the Expected Policy KL Under the Current State Distribution}
\label{sec:bounds}
Let the state distributions and policies be parameterized as following:  $p_t(\s) = \Normal(\s|\boldsymbol{\mu}_p, \Sigma_p)$, $q_t(\s) = \Normal(\s|\boldsymbol{\mu}_q, \Sigma_q)$, $p_{t}(\a|\s) = \Normal(\a|K\s+\boldsymbol{b}, \Sigma)$ and $q_{t}(\a|\s) = \Normal(\a|K'\s+\boldsymbol{b}', \Sigma')$. The change of the state distribution in the expected KL constraint of our policy update, given by $\EE_{\s\sim q_{t}}[\KLM{p_{t}(.|\s)}{q_{t}(.|\s)}]$ from state distribution $q_t$ to $p_t$ will only affect the part of the KL that depends on the state.\\

We give as a reminder the general formula for the KL between two Gaussian distributions $l = \Normal(\boldsymbol{\mu}, \Sigma)$ and $l' = \Normal(\boldsymbol{\mu}', \Sigma')$ 
\[\KLM{l}{l'} = \frac{1}{2}\left(\text{tr}(\Sigma^{'-1}\Sigma) + (\boldsymbol{\mu}-\boldsymbol{\mu}')^T \Sigma^{-'1}(\boldsymbol{\mu}-\boldsymbol{\mu}') - dim + \log\frac{|\Sigma'|}{|\Sigma|} \right).\]\\

For the linear-Gaussian policies, and since only the mean of the policies depend on the state, the change of state distribution in the expected KL will only affect the term \[(K\s-K'\s)^T\Sigma'(K\s-K'\s) = \s^TM\s,\]
with p.s.d. matrix $M = (K-K')^T\Sigma'(K-K')$. Thus it suffices to bound the expectation $\int p_t(\s) \s^TM\s$ since the rest of the KL terms are already bounded by $\epsilon$, yielding
\begin{align*}
\EE_{s\sim p_{t}}[\KLM{p_{t}(.|\s)}{q_{t}(.|\s)}] &\leq \epsilon + \frac{1}{2}\int p_t(\s) \s^TM\s,\\
&= \epsilon + \frac{1}{2}\left(\boldsymbol{\mu}_pM\boldsymbol{\mu}_p+\tr(M\Sigma_p)\right),
\end{align*}
where we exploited the Gaussian nature of $p_t$ in the second line of the equation. We will now bound both $\boldsymbol{\mu}_pM\boldsymbol{\mu}_p$ and $\tr(M\Sigma_p)$. First, note that for any two p.d. matrices $\Sigma$ and $\Sigma'$ we have 
\begin{align}
\label{posKLCov}
\text{tr}(\Sigma^{'-1}\Sigma) + - d_s + \log\frac{|\Sigma'|}{|\Sigma|} \geq 0.
\end{align}
This immediately follows from the non-negativity of the KL. Since, if for some $\Sigma$ and $\Sigma'$, eq. \eqref{posKLCov} is negative then the KL for two Gaussian distributions having $\Sigma$ and $\Sigma'$ as covariance matrices and sharing the same mean would be negative which is not possible. Hence it also follows that
\begin{align}
(\boldsymbol{\mu}_q-\boldsymbol{\mu}_p)^T \Sigma_q^{-1}(\boldsymbol{\mu}_q-\boldsymbol{\mu}_p) \leq 2\epsilon_t,\label{boundedMean}
\end{align}
from the bounded KL induction hypothesis between $p_t$ and $q_t$.\\

For the expected policy KL, since the part that does not depend on $s$ is positive as in eq. \eqref{posKLCov}, it can thus be dropped out yielding
\begin{align} 
\EE_{s\sim q_{t}}[\KLM{p_{t}(.|\s)}{q_{t}(.|\s)}] \leq \epsilon &\Rightarrow \int q_t(\s)\s^TM\s \leq 2\epsilon,\notag\\
&\Rightarrow \boldsymbol{\mu}_qM\boldsymbol{\mu}_q+\tr(M\Sigma_q) \leq 2\epsilon.\label{boundedQ}
\end{align}

Also note that for any p.s.d. matrices $A$ and $B$, $\tr(AB) \geq 0$. Letting $\boldsymbol{x} = \boldsymbol{\mu}_p - \boldsymbol{\mu}_q$, we have 
\begin{align*}
\boldsymbol{x}^TM\boldsymbol{x} &= \tr(\boldsymbol{x} \boldsymbol{x}^T M),\\
&= \tr(\Sigma_q^{-1}\boldsymbol{x} \boldsymbol{x}^T M\Sigma_q),\\
&\leq \tr(\Sigma_q^{-1}\boldsymbol{x} \boldsymbol{x}^T) \tr(M\Sigma_q),\\ 
&\leq 4\epsilon_t \epsilon.
\end{align*}
Third line is due to Cauchy-Schwarz inequality and positiveness of traces while the last  one is from eq. \eqref{boundedMean} and \eqref{boundedQ}.
Finally, from the reverse triangular inequality, we have  
\begin{align*}
\boldsymbol{\mu}_pM\boldsymbol{\mu}_p &\leq \boldsymbol{x}^TM\boldsymbol{x} + \boldsymbol{\mu}_qM\boldsymbol{\mu}_q,\\
&\leq 2\epsilon(1+2\epsilon_t),
\end{align*}
Which concludes the bounding of $\boldsymbol{\mu}_pM\boldsymbol{\mu}_p$.\\

To bound $\tr(M\Sigma_p)$ we can write 
\begin{align*}
\tr(M\Sigma_p) &= \tr(M\Sigma_q\Sigma_q^{-1}\Sigma_p),\\
&\leq\tr(M\Sigma_q)\tr(\Sigma_q^{-1}\Sigma_p).
\end{align*}
We know how to bound $\tr(M\Sigma_q)$ from Eq. \eqref{boundedQ}. While $\tr(\Sigma_q^{-1}\Sigma_p)$ appears in the bounded KL between state distributions. Bounding $\tr(\Sigma_q^{-1}\Sigma_p)$ is equivalent to solving $\max \sum \lambda_i$ under constraint $\sum \lambda_i - d_s - \sum\log\lambda_i \leq 2\epsilon_t$, where the $\{\lambda_i\}$ are the eigenvalues of $\Sigma_q^{-1}\Sigma_p$. For any solution $\{\lambda_i\}$, we can keep the same optimization objective using equal $\{\lambda_i'\}$ where for each $i$, $\lambda_i' = \bar{\lambda} = \sum \lambda_i/d_s$ is the average lambda. This transformation will at the same time reduce the value of the constraint since $-d_s\log\bar{\lambda} \leq -\sum\log\lambda_i$ from Jensen's inequality. Hence the optimum is reached when all the $\lambda_i$ are equal, and the constraint is active (i.e. $d_s \bar{\lambda} -d_s - d_s\log{\bar{\lambda}} = 2\epsilon_t$). Finally, the constraint is at a minimum for $\bar{\lambda} = 1$, hence $\bar{\lambda} > 1$. The maximum is reached at
\begin{align}
&d_s \bar{\lambda} -d_s - d_s\log{\bar{\lambda}} = 2\epsilon_t \notag\\
\Leftrightarrow &\bar{\lambda} - \log{\bar{\lambda}} = \frac{2\epsilon_t}{d_s} + 1 \notag\\
\Rightarrow &\bar{\lambda} \leq \left(\frac{2\epsilon_t}{d_s} + 1\right) \frac{e}{e-1} \notag\\
\Rightarrow &\tr(\Sigma_q^{-1}\Sigma_p) \leq 4 \epsilon_t + 2d_s \label{boundedTr}
\end{align} 
The equation in the second line has a unique solution ($f(\lambda) = \lambda - \log \lambda$ is a strictly increasing function for $\lambda > 1$) for which no closed form expression exists. We thus lower bound $f$ by $g(\lambda) = \frac{e-1}{e}\lambda$ and solve the equation for $g$ which yields an upper bound of the original equation that is further simplified in the last inequality.\\
Eq. \eqref{boundedQ} and \eqref{boundedTr} yield $\tr(M\Sigma_p) \leq 2\epsilon (4\epsilon_t + 2d_s)$ and grouping all the results yields
\begin{align*}
\EE_{s\sim p_{t}}[\KLM{p_{t}(.|\s)}{q_{t}(.|\s)}] &\leq 2\epsilon \left(3 \epsilon_t +  d_s + 1\right)
\end{align*}
\newpage
\vskip 0.2in
\bibliography{papers}

\begin{thebibliography}{31}
\providecommand{\natexlab}[1]{#1}
\providecommand{\url}[1]{\texttt{#1}}
\expandafter\ifx\csname urlstyle\endcsname\relax
  \providecommand{\doi}[1]{doi: #1}\else
  \providecommand{\doi}{doi: \begingroup \urlstyle{rm}\Url}\fi

\bibitem[Abdolmaleki et~al.(2015)Abdolmaleki, Lioutikov, Peters, Lau,
  Pualo~Reis, and Neumann]{Abdolmaleki15}
A.~Abdolmaleki, R.~Lioutikov, J.~Peters, N.~Lau, L.~Pualo~Reis, and G.~Neumann.
\newblock Model-based relative entropy stochastic search.
\newblock In \emph{Advances in Neural Information Processing Systems (NIPS)}.
  2015.

\bibitem[Akrour et~al.(2016)Akrour, Abdolmaleki, Abdulsamad, and
  Neumann]{Akrour16}
R.~Akrour, A.~Abdolmaleki, H.~Abdulsamad, and G.~Neumann.
\newblock Model-free trajectory optimization for reinforcement learning.
\newblock In \emph{International Conference on Machine Learning (ICML)}, 2016.

\bibitem[Bertsekas(1995)]{Bertsekas1995}
D.~P. Bertsekas.
\newblock \emph{Dynamic programming and optimal control}.
\newblock Athena Scientific, 1995.

\bibitem[Bhojanapalli et~al.(2015)Bhojanapalli, Kyrillidis, and
  Sanghavi]{Bhojanapalli15}
S.~Bhojanapalli, A.~T. Kyrillidis, and S.~Sanghavi.
\newblock Dropping convexity for faster semi-definite optimization.
\newblock \emph{CoRR}, 2015.

\bibitem[Daniel et~al.(2012)Daniel, Neumann, and Peters]{Daniel2012}
C.~Daniel, G.~Neumann, and J.~Peters.
\newblock {Hierarchical Relative Entropy Policy Search}.
\newblock In \emph{International Conference on Artificial Intelligence and
  Statistics (AISTATS)}, 2012.

\bibitem[Deisenroth and Rasmussen(2011)]{Deisenroth2011}
M.~Deisenroth and C.~Rasmussen.
\newblock {PILCO: A Model-Based and Data-Efficient Approach to Policy Search}.
\newblock In \emph{International Conference on Machine Learning (ICML)}, 2011.

\bibitem[Deisenroth et~al.(2013)Deisenroth, Neumann, and
  Peters]{Deisenroth2013}
M.~P. Deisenroth, G.~Neumann, and J.~Peters.
\newblock {A Survey on Policy Search for Robotics}.
\newblock \emph{Foundations and Trends in Robotics}, 2013.

\bibitem[Dhariwal et~al.(2017)Dhariwal, Hesse, Klimov, Nichol, Plappert,
  Radford, Schulman, Sidor, and Wu]{baselines}
P.~Dhariwal, C.~Hesse, O.~Klimov, A.~Nichol, M.~Plappert, A.~Radford,
  J.~Schulman, S.~Sidor, and Y.~Wu.
\newblock Openai baselines.
\newblock \url{https://github.com/openai/baselines}, 2017.

\bibitem[Ijspeert and Schaal(2003)]{Ijspeert2003}
A.~Ijspeert and S.~Schaal.
\newblock Learning attractor landscapes for learning motor primitives.
\newblock In \emph{Advances in Neural Information Processing Systems (NIPS)}.
  2003.

\bibitem[Kakade(2003)]{Kakade03}
S.~Kakade.
\newblock \emph{{On the Sample Complexity of Reinforcement Learning.}}
\newblock PhD thesis, University College London, 2003.

\bibitem[Kakade and Langford(2002)]{Kakade02}
S.~Kakade and J.~Langford.
\newblock Approximately optimal approximate reinforcement learning.
\newblock In \emph{International Conference on Machine Learning (ICML)}, 2002.

\bibitem[Kupcsik et~al.(2013)Kupcsik, Deisenroth, Peters, and
  Neumann]{Kupcsik13}
A.~G. Kupcsik, M.~P. Deisenroth, J.~Peters, and G.~Neumann.
\newblock Data-efficient generalization of robot skills with contextual policy
  search.
\newblock In \emph{The Conference on Artificial Intelligence (AAAI)}, 2013.

\bibitem[Levine and Abbeel(2014)]{Levine14}
S.~Levine and P.~Abbeel.
\newblock Learning neural network policies with guided policy search under
  unknown dynamics.
\newblock In \emph{Advances in Neural Information Processing Systems (NIPS)}.
  2014.

\bibitem[Levine and Koltun(2014)]{Levine14Model}
S.~Levine and V.~Koltun.
\newblock Learning complex neural network policies with trajectory
  optimization.
\newblock In \emph{International Conference on Machine Learning (ICML)}, 2014.

\bibitem[Lillicrap et~al.(2015)Lillicrap, Hunt, Pritzel, Heess, Erez, Tassa,
  Silver, and Wierstra]{Lillicrap15}
T.~P. Lillicrap, J.~J. Hunt, A.~Pritzel, N.~Heess, T.~Erez, Y.~Tassa,
  D.~Silver, and D.~Wierstra.
\newblock Continuous control with deep reinforcement learning.
\newblock \emph{CoRR}, 2015.

\bibitem[Mnih et~al.(2015)Mnih, Kavukcuoglu, Silver, Rusu, Veness, Bellemare,
  Graves, Riedmiller, Fidjeland, Ostrovski, Petersen, Beattie, Sadik,
  Antonoglou, King, Kumaran, Wierstra, Legg, and Hassabis]{Mnih15}
V.~Mnih, K.~Kavukcuoglu, D.~Silver, A.~A. Rusu, J.~Veness, M.~G. Bellemare,
  A.~Graves, M.~Riedmiller, A.~K. Fidjeland, G.~Ostrovski, S.~Petersen,
  C.~Beattie, A.~Sadik, I.~Antonoglou, H.~King, D.~Kumaran, D.~Wierstra,
  S.~Legg, and D.~Hassabis.
\newblock Human-level control through deep reinforcement learning.
\newblock \emph{Nature}, 2015.

\bibitem[M\"ulling et~al.(2011)M\"ulling, Kober, and Peters]{Muelling2011}
K.~M\"ulling, J.~Kober, and J.~Peters.
\newblock A biomimetic approach to robot table tennis.
\newblock \emph{Adaptive Behavior Journal}, 2011.

\bibitem[Pan and Theodorou(2014)]{Pan14}
Y.~Pan and E.~Theodorou.
\newblock Probabilistic differential dynamic programming.
\newblock In \emph{Advances in Neural Information Processing Systems (NIPS)}.
  2014.

\bibitem[Peters et~al.(2010)Peters, M{\"u}lling, and Altun]{Peters2010}
J.~Peters, K.~M{\"u}lling, and Y.~Altun.
\newblock {R}elative entropy policy search.
\newblock In \emph{Conference on Artificial Intelligence (AAAI)}, 2010.

\bibitem[Pirotta et~al.(2013{\natexlab{a}})Pirotta, Restelli, and
  Bascetta]{Pirotta13Adaptive}
M.~Pirotta, M.~Restelli, and L.~Bascetta.
\newblock Adaptive step-size for policy gradient methods.
\newblock In \emph{Advances in Neural Information Processing Systems (NIPS)}.
  2013{\natexlab{a}}.

\bibitem[Pirotta et~al.(2013{\natexlab{b}})Pirotta, Restelli, Pecorino, and
  Calandriello]{Pirotta13Safe}
M.~Pirotta, M.~Restelli, A.~Pecorino, and D.~Calandriello.
\newblock Safe policy iteration.
\newblock In \emph{International Conference on Machine Learning (ICML)},
  2013{\natexlab{b}}.

\bibitem[R{\"{u}}ckert et~al.(2014)R{\"{u}}ckert, Mindt, Peters, and
  Neumann]{Ruckert14}
E.~A. R{\"{u}}ckert, M.~Mindt, J.~Peters, and G.~Neumann.
\newblock Robust policy updates for stochastic optimal control.
\newblock In \emph{International Conference on Humanoid Robots (Humanoids)},
  2014.

\bibitem[Schulman et~al.(2015)Schulman, Levine, Abbeel, Jordan, and
  Moritz]{Schulman15}
J.~Schulman, S.~Levine, P.~Abbeel, M.~I. Jordan, and P.~Moritz.
\newblock Trust region policy optimization.
\newblock In \emph{International Conference on Machine Learning (ICML)}, 2015.

\bibitem[Silver et~al.(2016)Silver, Huang, Maddison, Guez, Sifre, van~den
  Driessche, Schrittwieser, Antonoglou, Panneershelvam, Lanctot, Dieleman,
  Grewe, Nham, Kalchbrenner, Sutskever, Lillicrap, Leach, Kavukcuoglu, Graepel,
  and Hassabis]{Silver16}
D.~Silver, A.~Huang, C.~J. Maddison, A.~Guez, L.~Sifre, G.~van~den Driessche,
  J.~Schrittwieser, I.~Antonoglou, V.~Panneershelvam, M.~Lanctot, S.~Dieleman,
  D.~Grewe, J.~Nham, N.~Kalchbrenner, I.~Sutskever, T.~Lillicrap, M.~Leach,
  K.~Kavukcuoglu, T.~Graepel, and D.~Hassabis.
\newblock Mastering the game of {Go} with deep neural networks and tree search.
\newblock \emph{Nature}, 2016.

\bibitem[Szepesvari(2010)]{Szepesvari10}
C.~Szepesvari.
\newblock \emph{Algorithms for Reinforcement Learning}.
\newblock Morgan \& Claypool, 2010.

\bibitem[Theodorou et~al.(2009)Theodorou, Buchli, and Schaal]{Theodorou2009}
E.~Theodorou, J.~Buchli, and S.~Schaal.
\newblock Path integral stochastic optimal control for rigid body dynamics.
\newblock In \emph{International Symposium on Approximate Dynamic Programming
  and Reinforcement Learning (ADPRL)}, 2009.

\bibitem[Theodorou et~al.(2010)Theodorou, Tassa, and Todorov]{Theodorou2010a}
E.~Theodorou, Y.~Tassa, and E.~Todorov.
\newblock Stochastic differential dynamic programming.
\newblock In \emph{American Control Conference (ACC)}, 2010.

\bibitem[Todorov(2006)]{Todorov2006}
E.~Todorov.
\newblock Optimal control theory.
\newblock \emph{Bayesian Brain}, 2006.

\bibitem[Todorov and Tassa(2009)]{Todorov09}
E.~Todorov and Y.~Tassa.
\newblock Iterative local dynamic programming.
\newblock In \emph{International Symposium on Adaptive Dynamic Programming and
  Reinforcement Learning (ADPRL)}, 2009.

\bibitem[Toussaint(2009)]{Toussaint2009}
M.~Toussaint.
\newblock Robot trajectory optimization using approximate inference.
\newblock In \emph{International Conference on Machine Learning (ICML)}, 2009.

\bibitem[Wagner(2011)]{Wagner11}
P.~Wagner.
\newblock A reinterpretation of the policy oscillation phenomenon in
  approximate policy iteration.
\newblock In \emph{Advances in Neural Information Processing Systems (NIPS)}.
  2011.

\end{thebibliography}

\end{document}